\definecolor{colorbluefull}{rgb}{0.25882352941176473, 0.5215686274509804, 0.9568627450980393}
\colorlet{colorblue}{colorbluefull!30}
\theoremstyle{plain}
\newtheorem{theorem}{Theorem}[section]
\newtheorem{proposition}[theorem]{Proposition}
\newtheorem{lemma}[theorem]{Lemma}
\newtheorem{corollary}[theorem]{Corollary}
\theoremstyle{definition}
\newtheorem{definition}[theorem]{Definition}
\newtheorem{assumption}[theorem]{Assumption}
\theoremstyle{remark}
\crefname{theorem}{Theorem}{Theorems}
\Crefname{theorem}{Theorem}{Theorems}
\crefname{proposition}{Proposition}{Propositions}
\Crefname{proposition}{Proposition}{Propositions}
\crefname{lemma}{Lemma}{Lemmas}
\Crefname{lemma}{Lemma}{Lemmas}
\crefname{corollary}{Corollary}{Corollaries}
\Crefname{corollary}{Corollary}{Corollaries}
\crefname{definition}{Definition}{Definitions}
\Crefname{definition}{Definition}{Definitions}
\crefname{assumption}{Assumption}{Assumptions}
\Crefname{assumption}{Assumption}{Assumptions}
\crefname{remark}{Remark}{Remarks}
\Crefname{remark}{Remark}{Remarks}
\newcommand{\highlight}[1]{\colorbox{blue!10}{#1}}
\newcommand{\highlightr}[1]{\colorbox{red!10}{#1}}
\begin{document}

\twocolumn[
\icmltitle{Revisiting Non-Acyclic GFlowNets in Discrete Environments}



\icmlsetsymbol{equal}{*}

\begin{icmlauthorlist}
\icmlauthor{Nikita Morozov}{equal,hse}
\icmlauthor{Ian Maksimov}{equal,hse}
\icmlauthor{Daniil Tiapkin}{paris,CNRSSACLAY}
\icmlauthor{Sergey Samsonov}{hse}
\end{icmlauthorlist}

\icmlaffiliation{hse}{HSE University, Moscow, Russia}
\icmlaffiliation{paris}{CMAP – CNRS – {\'E}cole polytechnique – Institut Polytechnique de
Paris, 91128, Palaiseau, France}
\icmlaffiliation{CNRSSACLAY}{Université Paris-Saclay, CNRS, LMO, 91405, Orsay, France}

\icmlcorrespondingauthor{Nikita Morozov}{nvmorozov@hse.ru}

\icmlkeywords{Machine Learning, ICML}

\vskip 0.3in
]



\printAffiliationsAndNotice{\icmlEqualContribution} 

\begin{abstract}

Generative Flow Networks (GFlowNets) are a family of generative models that learn to sample objects from a given probability distribution, potentially known up to a normalizing constant. Instead of working in the object space, GFlowNets proceed by sampling trajectories in an appropriately constructed directed acyclic graph environment, greatly relying on the acyclicity of the graph. In our paper, we revisit the theory that relaxes the acyclicity assumption and present a simpler theoretical framework for non-acyclic GFlowNets in discrete environments. Moreover, we provide various novel theoretical insights related to training with fixed backward policies, the nature of flow functions, and connections between entropy-regularized RL and non-acyclic GFlowNets, which naturally generalize the respective concepts and theoretical results from the acyclic setting. In addition, we experimentally re-examine the concept of loss stability in non-acyclic GFlowNet training, as well as validate our own theoretical findings.

\end{abstract}

\section{Introduction}
\label{sec:intro}
Generative Flow Networks (GFlowNets, \citealp{bengio2021flow}) are models that aim to sample discrete objects from distributions known proportionally up to a constant. They operate by constructing an object through a sequence of stochastic transitions defined
by a forward policy. GFlowNets have been successfully applied in various areas, starting from molecule generation \cite{bengio2021flow, 
shen2024tacogfn, koziarski2024rgfn, cretu2025synflownet} and biological sequence design \cite{jain2022biological, kim2024improvedoff} to combinatorial optimization~\cite{zhang2023solving, zhang2023robust, kim2025ant} and fine-tuning of large language models and diffusion models~\cite{hu2023amortizing, venkatraman2024amortizing, uehara2024understanding, zhang2024improving, lee2025learning}. The detailed theoretical foundations of GFlowNets in discrete environments were developed in \cite{bengio2023gflownet}. While the majority of GFlowNet literature considers the discrete setting, it is possible to apply the methodology of continuous GFlowNets \cite{lahlou2023theory} to sampling problems on more general spaces. 
\par 
The main idea behind the generation process in GFlowNets lies in sampling trajectories in the appropriately constructed directed acyclic graph environment instead of working directly in the object space. A standard intuition behind this process is a sequence of actions applied in order to construct a composite object from "blocks". One of the limitations of this setting is that it requires acyclicity. While this limitation can be naturally interpreted in, e.g., molecule generation setting, it can confine the practical design of GFlowNet environments, as well as restrict the applicability of GFlowNets in other scenarios. A motivational example for non-acyclic GFlowNets presented by~\cite{brunswic2024theory} is related to modeling distributions over objects with intrinsic symmetries. Consider a class of environments where states are elements of some group, e.g. symmetric group or Rubik's Cube group. The transitions are given via a generating set of this group, thus corresponding to applying the group operation on the current state and some element of the generating set, which leads to the existence of cycles. While in some cases an acyclic environment can be designed to generate group elements, such environments of “algebraic” origin naturally contain cycles, thus falling under the area of our study. In addition, there is a growing body of work that connects GFlowNets and Reinforcement Learning~\cite{tiapkin2024generative, mohammadpour2024maximum, deleu2024discrete, he2024rectifying}. Most RL environments contain cycles, thus understanding how GFlowNets can be applied in cyclic environments and connecting them to RL in such cases can be a crucial step towards further bridging two research fields.

To the best of our knowledge, methodological aspects of working with non-acyclic environments in GFlowNets were previously considered only in the recent work of \cite{brunswic2024theory}. The latter paper, similarly to \cite{lahlou2023theory}, uses the machinery of measurable spaces and measure theory, which is harder to build new extensions and methodology upon. We believe that simplicity is a key merit of the theory behind discrete GFlowNets~\cite{bengio2023gflownet} when compared to their general state counterparts. Thus, the main aim of our paper is to revisit the theory of non-acyclic GFlowNets within a discrete state space, simplifying the constructions of \cite{brunswic2024theory} and providing additional theoretical and methodological insights into training GFlowNets in this setting. The main contributions of the paper can be summarized as follows:
\vspace{-0.3cm}
\begin{enumerate}[itemsep=-2.2pt,leftmargin=12pt]
    \item We present a simple and intuitive way to build a theory of non-acyclic GFlowNets in discrete environments from scratch. In addition to simplicity, our construction introduces and clarifies a number of key points regarding similarities and dissimilarities between acyclic and non-acyclic discrete GFlowNets that were not explored in \cite{brunswic2024theory}, in particular regarding the nature of flows and importance of backward policies.
    \item We show that when the backward policy is fixed, the loss stability introduced by \cite{brunswic2024theory} does not impact the result of the optimization procedure. The latter becomes important only when the backward policy is also being trained. 
    \item When backward policy is trained, we show that learning a non-acyclic GFlowNet with the smallest expected trajectory length is equivalent to learning a non-acyclic GFlowNet with the smallest total flow. In addition, we propose state flow regularization as a way to approach the arising optimization problem.
    \item We empirically show that the scale in which flow error is computed in the loss plays a crucial role in its stability. However, we also show that using an unstable loss with the proposed state flow regularization can lead to better sampling quality.
    \item Finally, we generalize the key theoretical result of \cite{tiapkin2024generative} on the equivalence between GFlowNets and entropy-regularized RL to the non-acyclic setting.
\end{enumerate}

\vspace{-0.3cm}
Source code: \href{https://github.com/GreatDrake/non-acyclic-gfn}{github.com/GreatDrake/non-acyclic-gfn}.

\section{Background}
\label{sec:background}
\subsection{GFlowNets}
\label{sec:background_gflow}

This section presents necessary notations and theoretical background on GFlowNets. GFlowNets treat the problem of sampling from a probability distribution over discrete space $\cX$ as a sequential decision-making process in a directed acyclic graph (DAG) \(\cG = (\mathcal{S}, \mathcal{E})\), where \(\mathcal{S}\) is a finite state space and \(\mathcal{E} \subseteq \mathcal{S} \times \mathcal{S}\) is a finite set of edges (or transitions). There is a special \textit{initial state} $s_0$ with no incoming edges and a special \textit{sink state} $s_f$ with no outgoing edges. The commonly used variant of notation does not include a sink state $s_f$, yet we prefer to use a variant with $s_f$, since it was also used in the previous work on non-acyclic GFlowNets~\cite{brunswic2024theory} and leads to a more intuitive construction. Let $\cT$ be a set of all trajectories $\tau = \left(s_0 \to s_1 \to \ldots \to s_{n_{\tau}} \to s_f\right)$ from $s_0$ to $s_f$, where we use $n_{\tau}$ to denote the length of the trajectory $\tau$. We use a convention $s_{n_{\tau} + 1} = s_f$. We say that $\tau$ \textit{terminates} in $s$ if its last transition is $s \to s_f$. Such transitions are called \textit{terminating}, and the states that have an outgoing edge into $s_f$ are called \textit{terminal} states. The set of terminal states coincides with $\cX$, and the probability distribution of interest $\cR(x) / \cZ$ is defined on it, where $\cR(x) > 0$ is called \textit{GFlowNet reward} and $\cZ = \sum_{x \in \cX} \cR(x)$ is an unknown normalizing constant. In addition, for any state $s$, we denote $\vin(s)$ to be the set of states $s'$ such that there is an edge $s' \to s \in \cE$ (parents), and $\vout(s)$ to be the set of states $s'$ such that there is an edge $s \to s' \in \cE$ (children).

The main goal of GFlowNets is to find a distribution $\cP$ over $\cT$ such that for any $x \in \cX$, probability that $\tau \sim \cP$ terminates in $x$ coincides with $\cR(x) / \cZ$. This property is called the \textit{reward matching condition}. GFlowNets operate with Markovian distributions over trajectories (see \cite{bengio2023gflownet} for a definition and discussion) using the following key components: 
\begin{enumerate}
\item a \textit{forward policy} $\PF(s' \mid s)$, which is a distribution over children of each state;
\item a \textit{backward policy} $\PB(s \mid s')$, which is a distribution over parents of each state;
\item \textit{state/edge flows} $\cF(s)$, $\cF(s \to s')$, which coincide with an unnormalized probability that a trajectory $\tau$ passes through state/edge.
\end{enumerate}
$\PF$, $\PB$, and $\cF$ are connected through the \textit{trajectory balance conditions:}
\begin{equation}
\label{eq:tb}
     \cP(\tau) = \prod_{t=0}^{n_\tau} \PF \left(s_{t+1} \mid s_{t}\right) = \prod_{t=0}^{n_\tau} \PB \left(s_{t} \mid s_{t+1}\right)\,,
\end{equation}
\textit{detailed balance conditions:}
\begin{equation}
\label{eq:db}
     \cF(s \to s') = \cF(s)\PF(s' \mid s) = \cF(s')\PB(s \mid s')\,,
\end{equation}
and \textit{flow matching conditions:}
\begin{equation}
\label{eq:fm}
     \cF(s) = \sum_{s' \in \vout(s)} \cF(s \to s') = \sum_{s'' \in \vin(s)} \cF(s'' \to s)\,.
\end{equation}

All of these objects are completely and uniquely specified if one fixes either 1) edge flow $\cF(s \to s')$, 2) initial flow $\cF(s_0)$ and $\PF$, 3) initial flow $\cF(s_0)$ and $\PB$.
If flows satisfy $\cF(s \to s_f) = \cR(s)$, trajectory distribution defined by the corresponding forward policy will satisfy the reward matching condition~\cite{bengio2023gflownet}. In practice, a neural network is used to parameterize the forward policy (and, optionally, the backward policy and the flows). Then, it is trained to minimize some loss function that would enforce the reward matching condition. For example, \textit{Detailed Balance} loss~\cite{bengio2023gflownet} is defined on all transitions $s \to s^{\prime} \in \cE$ as:
\begin{equation}
\label{eq:DB_loss}
\mathcal{L}_{\DB}(s \to s^{\prime}) \triangleq \left(\log \frac{\cF_{\theta}(s) \PF(s' \mid s, \theta)}{\cF_{\theta}(s')\PB(s \mid s', \theta)} \right)^2\eqsp.
\end{equation}
Reward matching is enforced by substituting $\cF(x \to s_f) = \cF_{\theta}(s_f)\PB(x \mid s_f, \theta) = \cR(x)$ in the loss. 
Although the optimization task typically admits multiple solutions, fixing the backward policy results in a unique solution in terms of  $\cF$ and $\PF$~\cite{bengio2023gflownet}.

\subsection{GFlowNets in Non-Acyclic Environments}

\cite{brunswic2024theory} state that fundamental results of GFlowNet theory also apply in the case when the environment graph $\cG$ may contain cycles, and
all definitions from the acyclic case remain valid and extend to the non-acyclic case.
However, we will further show that this is not exactly true, e.g., \textit{flows cannot be consistently defined as unnormalized visitation probabilities}. 

More specifically, \cite{brunswic2024theory} argue that if \eqref{eq:fm} holds for an edge flow, as well as $\cF(s \to s_f) = \cR(s)$ for terminating transitions, the forward policy induced by the flow $\PF(s' \mid s) = \frac{\cF(s \to s')}{\cF(s)}$ satisfies the reward matching condition. Thus, standard GFlowNet losses, such as Flow Matching ($\FM$, \citealp{bengio2021flow}), Detailed Balance ($\DB$, \citealp{bengio2023gflownet}), and Trajectory Balance ($\TB$, \citealp{malkin2022trajectory}) can be applied in non-acyclic environments.

However, \cite{brunswic2024theory} point out that the main distinction between non-acyclic and acyclic GFlowNets is that in the non-acyclic setting, expected trajectory length $\E[n_{\tau}]$ (denoted as a sampling time in \cite{brunswic2024theory}) can be arbitrarily large because of the cycles, while in the acyclic setting it is always bounded. To tackle this issue, a concept of \textit{loss stability} is introduced. A loss is called \textit{stable} if adding a constant to the flow along a cycle can never decrease the loss, and otherwise, it is called \textit{unstable} (Definition 3). It is shown that $\FM$, $\DB$, and $\TB$ are unstable (Theorem 3), which can lead to arbitrarily large sampling time when utilized for training. In contrast, a family of losses that are provably stable is presented (Theorem 4). Moreover, the authors show that there are stable variants of $\FM$ and $\DB$ losses, such as stable $\DB$ loss, which we denote as $\SDB$:
\begin{equation} \label{eq:StableDB_loss}
\resizebox{0.43\textwidth}{!}{$
\begin{aligned}
\mathcal{L}_{\SDB}(s \to s^{\prime})  
 & \triangleq \log(1 + \varepsilon \Delta^2(s,s',\theta))  (1 + \eta \cF_{\theta}(s))\,, \\\
 \Delta(s,s', \theta) &\triangleq \cF_{\theta}(s) \PF(s' | s, \theta) 
 - \cF_{\theta}(s')\PB(s | s', \theta)\eqsp,
\end{aligned}
$}
\end{equation}
\vspace{-0.6cm}

where $\varepsilon$ and $\eta$ are hyperparameters. In addition, \cite{brunswic2024theory} show that the expected trajectory length is bounded by the total normalized state flow
\begin{equation}
\label{eq:len_bound}
     \E[n_{\tau}] \leq \frac{1}{\cF(s_0)} \sum_{s \in \cS \setminus \{s_0, s_f\}} \cF(s)\,,
\end{equation}
and using a stable loss with a regularizer that controls the total flow, e.g., the norm of the edge flow matrix, can be used to learn an acyclic flow (Theorem 1).

\vspace{-0.2cm}
\section{Theory of GFlowNets in Discrete Non-Acyclic Environments}
\label{sec:construction}

\subsection{Environment}\label{sec:cyclic_env}
All definitions regarding the environment can be introduced similarly to the setting of acyclic GFlowNets (see Section~\ref{sec:background_gflow}) with one main difference: graph $\cG$ can now contain cycles. In addition to finiteness, we make two technical assumptions on the structure of $\cG$:
\begin{assumption}
\label{assumption}
    0) graph $\cG$ is finite; 1) There is a special initial state $s_0$ with no incoming edges and a special sink state $s_f$ with no outgoing edges; 2) For any state $s \in \cS$ there exists a path from $s_0$ to $s$ and a path from $s$ to $s_f$.
\end{assumption}

Next, we formally define trajectories:

\begin{definition}
\label{def:trajectory}
A sequence $\tau = (s_0 \to s_1 \to \ldots \to s_{n_{\tau}} \to s_{n_{\tau}+1} = s_f)$ is called a trajectory of length $n_{\tau} \in \mathbb{N}$ if all transitions $s_{t} \to s_{t+1} \in \mathcal{E}, t \in \{0,\ldots,n_{\tau}\}$. Then $\cT$ is a set of all finite-length trajectories that start in $s_0$ and finish in $s_f$.
\end{definition}

\vspace{-0.1cm}
In the above definition and further, we use a convention $s_{n_{\tau} + 1} = s_f$. While $\cG$ itself is always finite, the main difference with acyclic GFlowNets is that $\cT$ can be infinite, and $\cT$ can contain trajectories of arbitrary length. 

\subsection{Backward Policy and Trajectory Distribution}
There are several equivalent ways to introduce probability distributions over trajectories in GFlowNets. One of the common approaches is to start by introducing trajectory flows~\cite{bengio2023gflownet}. The main theoretical advantage of the approach based on trajectory flows is that it allows for non-Markovian flows, see \cite{bengio2023gflownet}. At the same time, Markovian flows are primarily considered in GFlowNets literature, and in our paper, we only consider this setting. Instead of starting from the definition of the flow, a more intuitive approach is to begin with the definitions of the \emph{forward} and \emph{backward} policies.
\begin{definition}
\label{def:backward_policy}
A forward policy $\PF(s' \mid s)$ consistent with $\cG$ is a family of conditional probability distributions over $s' \in \vout(s)$ defined for each $s \in \cS \setminus \{s_f\}$, Similarly, a backward policy $\PB(s \mid s')$ consistent with $\cG$ is a family of conditional probability distribution over $s \in \vin(s')$, defined for each $s' \in \cS \setminus \{s_0\}$.
\end{definition}

\vspace{-0.1cm}
In the subsequent parts of the paper, we always assume that the considered $\PF$ or $\PB$ are consistent with $\cG$ and do not specify this fact explicitly. \Cref{def:backward_policy} is consistent with the definitions of forward and backward policies in acyclic GFlowNets (\citealp{bengio2023gflownet}, Definition~4). Note that the structure of $\cG$ is symmetric with respect to an interchange between initial state $s_0$ and sink state $s_f$ and reversion of all edges in $\cG$. Thus, starting with either $\PF$ or $\PB$ is equivalent. We prefer to start from a backward policy $\PB$ in our subsequent derivations. Using $\PB$, we define a probability distribution $\cP$ over $\tau = \left(s_0 \to s_1 \to \ldots \to s_{n_{\tau}} \to s_f\right) \in \cT$ according to
\vspace{-0.2cm}
\begin{equation}
\label{eq:backward_distribution}
\cP(\tau) \triangleq \prod_{t = 0}^{n_\tau} \PB(s_t \mid s_{t + 1})\eqsp.
\end{equation}
\vspace{-0.4cm}

In such a case, we say that the trajectory distribution $\cP(\tau)$ is \textit{induced by $\PB$}. In the following lemma, we show that $\cP(\tau)$ is a correctly defined probability distribution over $\cT$. 

\begin{lemma}
\label{th:fixed_pb}
Let $\PB(s \mid s')$ be a backward policy, such that $\PB(s \mid s') > 0$ for any edge $s \to s' \in \cE$. Then
\begin{itemize}[noitemsep, nolistsep]
    \item $\cP(\tau)$ defined in \eqref{eq:backward_distribution} is a probability distribution over $\cT$, that is, $\sum_{\tau \in \cT} \cP(\tau) = 1$.
    \item Moreover, its expected trajectory length is always finite $\E_{\bf{\tau} \sim \cP}[n_{\bf{\tau}}] = \sum_{\tau \in \cT} n_{\tau} \cP(\tau) < +\infty$.
\end{itemize}
\end{lemma}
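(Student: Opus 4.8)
The plan is to realize $\cP$ as the law of an absorbing Markov chain that walks \emph{backward} along the reversed edges of $\cG$, and then to exploit finiteness of $\cS$ together with the connectivity in part~2) of \Cref{assumption}.

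First I would build the chain. Let $\cG^{\mathrm{rev}}$ be $\cG$ with every edge reversed, and let $(Y_k)_{k \ge 0}$ be the Markov chain on $\cS$ with $Y_0 = s_f$, with $s_0$ absorbing, and with $\Pr[Y_{k+1}=s \mid Y_k = s'] = \PB(s \mid s')$ for $s' \neq s_0$ and $s \in \vin(s')$. This is well-posed: for every $s' \neq s_0$, part~2) of \Cref{assumption} provides a path from $s_0$ to $s'$ in $\cG$ of length $\ge 1$, so $\vin(s') \neq \emptyset$ and $\PB(\cdot \mid s')$ is a genuine distribution on a nonempty set; hence the chain never gets stuck outside $s_0$. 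Because $s_0$ has no incoming and $s_f$ no outgoing edge in $\cG$, a sample path of the chain up to the first hitting time $T \triangleq \inf\{k : Y_k = s_0\}$, read in reverse, is in bijection with $\cT$. A short index chase shows that the event ``the chain realizes the reversal of a fixed $\tau \in \cT$'' has probability exactly $\prod_{t=0}^{n_\tau}\PB(s_t \mid s_{t+1}) = \cP(\tau)$ and forces $T = n_\tau + 1$. Summing over $\tau$ gives $\sum_{\tau \in \cT}\cP(\tau) = \Pr[T < \infty]$ and $\sum_{\tau \in \cT} n_\tau\cP(\tau) = \E\!\left[(T-1)\mathbf{1}\{T < \infty\}\right]$.

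Next I would establish a uniform geometric tail for $T$. Put $N \triangleq |\cS|$ and $p_{\min} \triangleq \min_{s \to s' \in \cE}\PB(s \mid s') > 0$, which is positive precisely by the hypothesis of the lemma. For any $u \neq s_0$, part~2) of \Cref{assumption} gives a simple path from $s_0$ to $u$ in $\cG$, equivalently a path from $u$ to $s_0$ in $\cG^{\mathrm{rev}}$ of length at most $N$; following exactly this path, the chain started at $u$ sits in $s_0$ after $N$ steps with probability at least $\delta \triangleq p_{\min}^{N} > 0$. By the Markov property applied at times $0, N, 2N, \dots$, this yields $\Pr[T > kN] \le (1-\delta)^k$ for every $k \ge 0$.

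Finally I would assemble the two claims: $\Pr[T = \infty] = \lim_{k}\Pr[T>kN] = 0$, hence $\sum_{\tau \in \cT}\cP(\tau) = 1$; and $\E[T] = \sum_{m \ge 0}\Pr[T > m] \le N\sum_{k \ge 0}\Pr[T > kN] \le N/\delta < \infty$, hence $\E_{\tau \sim \cP}[n_\tau] = \E[T] - 1 < \infty$. The main point to be careful about — the rest being a routine finite-absorbing-chain estimate — is the bookkeeping in the bijection step: that stopping at $T$ rules out intermediate visits to $s_0$, that $s_f$ cannot recur because it has no incoming edges in $\cG^{\mathrm{rev}}$, and that the reversal matches trajectory lengths and their $\PB$-weights exactly.
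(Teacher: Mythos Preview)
Your proof is correct and follows the same high-level strategy as the paper: realize $\cP$ as the law of the reversed-edge Markov chain started at $s_f$ with $s_0$ absorbing, identify finite trajectories with paths of the chain up to absorption, and then exploit finiteness of $\cS$ and positivity of $\PB$ to control the absorption time. The only substantive difference is in the technical estimate. The paper argues state by state: for each $s \in \cS \setminus \{s_0,s_f\}$ it shows the return probability $p_s$ is strictly less than $1$ (via a positive-probability path to $s_0$), deduces that the expected number of visits satisfies $\E[N_s'] = 1/(1-p_s) < \infty$, and then sums over states to bound $\E[n_\tau]$. You instead establish a uniform Doeblin-type minorization --- from any state the chain reaches $s_0$ within $N$ steps with probability at least $p_{\min}^{N}$ --- yielding the geometric tail $\Pr[T > kN] \le (1-\delta)^k$ directly. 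Both are textbook absorbing-chain arguments; yours is slightly quicker for the hitting-time bound, while the paper's per-state visit counts have the side benefit of connecting directly to the flow quantities introduced immediately afterwards in \Cref{def:cyclic_flows}.
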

In fact, the condition $\PB(s \mid s') > 0$  together with \Cref{assumption} allows us to ensure that the sequence $s_i$ is a finite state-space absorbing Markov chain. Given this assumption, the proof of \Cref{th:fixed_pb} almost coincides with the proof of the fact that the states of such a Markov chain are transient, see, e.g., \cite{douc:moulines:priouret:soulier:2018}. For completeness, we provide the proof in Appendix~\ref{app:pb_proof}.

\subsection{State and Edge Flows}
Given a probability distribution $\cP(\tau)$ induced by $\PB$, our next aim is to define state and edge flows. Before proceeding with a valid construction, we provide some intuition about our definitions.  Let us first show that, contrary to the acyclic GFlowNets, we cannot define edge flows as \textit{visitation probabilities} $\cP(\{\tau \in \cT \mid s \to s' \in \tau\})$.
In particular, we show that such a definition does not satisfy the flow matching conditions \eqref{eq:fm}. Indeed, consider an example from~\cite{brunswic2024theory}:
\[
\xymatrix{
 s_0  \ar[r]^{1}   &a  \ar[r]^{0.5}  &b \ar@/_1pc/[r]^{1} &c  \ar@/_1pc/[l]^{0.5}\ar[r]^{1}&s_f
}
\]
The number above each edge $s \to s'$ is $\PB(s \mid s')$. Consider the distribution $\cP(\tau)$ defined by \eqref{eq:backward_distribution}. Let us plot the {visitation probability} for each edge:
$$
\xymatrix{
 s_0  \ar[r]^{1}   &a  \ar[r]^{\textcolor{red}{1}}  &\textcolor{red}{b} \ar@/_1pc/[r]^{\textcolor{red}{1}} &\textcolor{red}{c}  \ar@/_1pc/[l]^{\textcolor{red}{0.5}}\ar[r]^{\textcolor{red}{1}}&s_f
}
$$
One can see that the flow matching condition \eqref{eq:fm} does not hold for states $b$ and $c$ since $1 \neq 1 + 0.5$. Instead, let us calculate the \textit{expected number of visits} for each edge $s \to s'$
\[
\E_{\tau \sim \cP}\left[ \sum_{t = 0}^{n_\tau} \ind\{s_{t} = s, s_{t + 1} = s'\}\right]\eqsp.
\]
We visualize the corresponding numbers on the plot below:
$$
\xymatrix{
 s_0  \ar[r]^{1}   &a  \ar[r]^{\textcolor{blue}{1}}  &\textcolor{blue}{b} \ar@/_1pc/[r]^{\textcolor{blue}{2}} &\textcolor{blue}{c}  \ar@/_1pc/[l]^{\textcolor{blue}{1}}\ar[r]^{\textcolor{blue}{1}}&s_f
}
$$
It is easy to check that the flow matching conditions \eqref{eq:fm} are now satisfied. Next, we formally define:
\begin{definition}
\label{def:cyclic_flows}
Let $\PB(s \mid s')$ be a backward policy, such that $\PB(s \mid s') > 0$ for any edge $s \to s' \in \cE$. Then, given a \emph{final flow} $\cF(s_f) > 0$, we define state and edge flows as
\begin{align} 
\cF(s \to s') &\triangleq \cF(s_f) \cdot \;\E_{\tau \sim \cP}\left[ \sum_{t = 0}^{n_\tau} \ind\{s_{t} = s, s_{t + 1} = s'\}\right], \nonumber  \\
\cF(s) & \triangleq \cF(s_f) \cdot \;\E_{\tau \sim \cP(\tau)}\left[ \sum_{t = 0}^{n_{\tau}+1} \ind\{s_t = s\}\right] \label{eq:flow_eqs}\eqsp.
\end{align}
We say that the flows defined above are induced by the backward policy $\PB$ and final flow $\cF(s_f)$.
\end{definition}
It is important to note that if $\cG$ does not contain cycles, the expected number of visits in \eqref{eq:flow_eqs} coincides with visitation probability, thus Definition~\ref{def:cyclic_flows} agrees with the usual understanding of flows in the acyclic GFlowNet literature. Next, we show that state and edge flows defined in \eqref{def:cyclic_flows} satisfy the detailed balance and flow matching conditions \eqref{eq:db} --  \eqref{eq:fm}.

\begin{proposition}
\label{th:flow_eqs}
Flows $\cF$ defined in~\Cref{def:cyclic_flows} satisfy:
\begin{enumerate}
    \item $\cF(s) \overset{(a)}{=} \sum\limits_{s' \in \vout(s)} \cF(s \to s') \overset{(b)}{=} \sum\limits_{s'' \vin(s)} \cF(s'' \to s),$ \\
    for each $s \in \cS \setminus \{s_0, s_f\}$. Moreover, identity (a) holds for $s_0$, and (b) holds for $s_f$. 
    \item $\cF(s \to s') = \cF(s')\PB(s \mid s')$ for any $s \to s' \in \cE$.
    \item $\cF(s_0) = \cF(s_f)$.
\end{enumerate}
\end{proposition}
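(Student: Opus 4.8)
The plan is to reduce all three identities to statements about expected numbers of visits to states and edges, and then to exploit the Markov structure of $\cP$. Following the proof of \Cref{th:fixed_pb}, the time-reversal of $\tau \sim \cP$ is exactly the absorbing Markov chain $(Z_k)_{k \ge 0}$ with $Z_0 = s_f$, transition kernel $\PB$, and absorbing state $s_0$: writing $M$ for the (a.s.\ finite) absorption time one has $s_i = Z_{M-i}$ and $n_\tau = M-1$. Set $N(s) \triangleq \sum_{t=0}^{n_\tau+1}\ind\{s_t = s\}$ and $N(s\to s') \triangleq \sum_{t=0}^{n_\tau}\ind\{s_t = s,\, s_{t+1}=s'\}$, so that \Cref{def:cyclic_flows} reads $\cF(s) = \cF(s_f)\,\E_{\tau\sim\cP}[N(s)]$ and $\cF(s\to s') = \cF(s_f)\,\E_{\tau\sim\cP}[N(s\to s')]$. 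Since $\sum_{s\in\cS} N(s) = n_\tau + 2$ pointwise, \Cref{th:fixed_pb} gives $\E_{\tau\sim\cP}[N(s)] \le \E_{\tau\sim\cP}[n_\tau] + 2 < \infty$ for every $s$; this finiteness licenses the interchanges of expectation and (possibly infinite) summation used below, everything in sight being nonnegative.

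For item~1, I would first record the pointwise identities $\sum_{s'\in\vout(s)} N(s\to s') = \sum_{t=0}^{n_\tau}\ind\{s_t=s\}$ (for $s\neq s_f$) and $\sum_{s''\in\vin(s)} N(s''\to s) = \sum_{t=0}^{n_\tau}\ind\{s_{t+1}=s\}$ (for $s\neq s_0$): by \Cref{def:trajectory} every transition $s_t\to s_{t+1}$ with $t\le n_\tau$ lies in $\cE$, so $s_{t+1}\in\vout(s_t)$ and $s_t\in\vin(s_{t+1})$, which collapses the inner indicator sums. Taking expectations (finite sums) and multiplying by $\cF(s_f)$, it then remains to note that $\sum_{t=0}^{n_\tau}\ind\{s_t=s\} = N(s)$ for $s\neq s_f$ (the only dropped position is $s_{n_\tau+1}=s_f$), which gives identity (a) for all $s\neq s_f$, in particular $s_0$; and $\sum_{t=0}^{n_\tau}\ind\{s_{t+1}=s\} = \sum_{t=1}^{n_\tau+1}\ind\{s_t=s\} = N(s)$ for $s\neq s_0$ (the only dropped position is $s_0$), which gives identity (b) for all $s\neq s_0$, in particular $s_f$. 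Intersecting the two ranges yields both identities on $\cS\setminus\{s_0,s_f\}$.

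For item~2, fix an edge $s\to s'\in\cE$, so $s'\neq s_0$. Under the reversed-chain picture, each traversal of $s\to s'$ by $\tau$ is exactly one transition of $(Z_k)$ out of $s'$ into $s$, and, since $s'\neq s_0$ is never the absorbing state, each visit of $\tau$ to $s'$ is exactly one (pre-absorption) visit of $(Z_k)$ to $s'$; hence $N(s\to s') = \sum_{k\ge0}\ind\{Z_k=s',\,Z_{k+1}=s\}$ and $N(s') = \sum_{k\ge0}\ind\{Z_k=s'\}$. By the Markov property, $\E[\ind\{Z_{k+1}=s\}\mid Z_0,\dots,Z_k] = \PB(s\mid s')$ on $\{Z_k=s'\}$, so $\E\big[\ind\{Z_k=s',Z_{k+1}=s\}\big] = \PB(s\mid s')\,\E\big[\ind\{Z_k=s'\}\big]$; summing over $k$ and using the finiteness from the first paragraph gives $\E_{\tau\sim\cP}[N(s\to s')] = \PB(s\mid s')\,\E_{\tau\sim\cP}[N(s')]$, i.e.\ $\cF(s\to s') = \cF(s')\PB(s\mid s')$. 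Item~3 is then immediate: every trajectory visits $s_0$ exactly once, at position $0$ and never again since $s_0$ has no incoming edges, so $N(s_0)\equiv1$ and $\cF(s_0) = \cF(s_f)\,\E_{\tau\sim\cP}[N(s_0)] = \cF(s_f)$. I expect the main obstacle to be item~2: it is the only place where genuinely probabilistic rather than combinatorial content enters, and making it rigorous requires the clean time-reversal correspondence between $\tau$ and $(Z_k)$ together with the exchange of expectation and an a priori infinite sum, both of which lean on \Cref{th:fixed_pb}.
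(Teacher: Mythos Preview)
Your proposal is correct and follows essentially the same route as the paper: for item~1 you use the same pointwise decomposition of the indicator $\ind\{s_t=s\}$ into sums over neighboring edges (with the appropriate boundary bookkeeping at $s_0$ and $s_f$), and for item~2 you pass to the reversed absorbing chain $(Z_k)$ and apply the Markov property termwise, interchanging sum and expectation via \Cref{th:fixed_pb}---exactly as the paper does with its chain $\{X_n\}$. The only cosmetic difference is that you invoke nonnegativity (Tonelli) for the interchange, whereas the paper cites finiteness of $\E[n_\tau]$ (Fubini); both are valid here.
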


We provide the proof in Appendix~\ref{app:flow_eq_proof}. In the next proposition, we show that there is a one-to-one correspondence between a pair ($\PB, \cF(s_f)$) and edge flows $\cF$. Its proof is provided in \Cref{app:pb_from_flow_proof}.
\begin{proposition}
\label{th:pb_from_flow}
Let $\cF : \cE \to \mathbb{R}_{>0}$ be a function that satisfies the flow matching conditions~\eqref{eq:fm}. Define the corresponding backward policy by the relation
\begin{equation*}
\PB(s \mid s') = \cF(s \to s') \;\; / \sum_{s'' \in \vin(s')} \cF(s'' \to s')\eqsp.
\end{equation*}
Then $\cF$ are edge flows induced by $\PB$ and $\cF(s_f) =  \sum_{s'' \in \vin(s_f)} \cF(s'' \to s_f)$.
\end{proposition}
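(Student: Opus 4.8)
The plan is to show that the given edge-flow function $\cF$ is exactly the one induced, in the sense of \Cref{def:cyclic_flows}, by $\PB$ together with the final flow $c \triangleq \sum_{s'' \in \vin(s_f)} \cF(s'' \to s_f)$; call the induced flows $\widetilde{\cF}$. Since $\cF > 0$ on all of $\cE$ and (by \Cref{assumption}) every state other than $s_0$ has a parent while every state other than $s_f$ has a child, the $\PB$ of the statement is a bona fide backward policy consistent with $\cG$, strictly positive on every edge, and $c > 0$, so \Cref{def:cyclic_flows} applies and yields $\widetilde{\cF}$ with $\widetilde{\cF}(s_f) = c$. First I would extend $\cF$ to a state flow via $\cF(s) \triangleq \sum_{s'' \in \vin(s)} \cF(s'' \to s)$ for $s \neq s_0$ and $\cF(s_0) \triangleq \sum_{s' \in \vout(s_0)} \cF(s_0 \to s')$; by \eqref{eq:fm} the two expressions agree for $s \notin \{s_0, s_f\}$, so $\cF(s) > 0$ for every $s$, and by the definition of $\PB$ the detailed balance relation $\cF(s \to s') = \cF(s')\,\PB(s \mid s')$ holds on every edge. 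The same relation holds for $\widetilde{\cF}$ by \Cref{th:flow_eqs}, so once we show $\cF(s) = \widetilde{\cF}(s)$ for all states, equality of edge flows follows and with it the entire statement (the final-flow identity in the statement being merely the definition of $c$). Hence the task reduces to proving $\cF(s) = c \cdot \E_{\tau \sim \cP}\bigl[\sum_{t=0}^{n_\tau+1} \ind\{s_t = s\}\bigr]$ for all $s$.

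To evaluate that expectation I would introduce the reversed-graph Markov chain $(X_k)_{k \ge 0}$ on state space $\cS$, with transition probabilities $\widehat{P}(s', s) \triangleq \PB(s \mid s')$ whenever $s' \neq s_0$ and $s \in \vin(s')$, and with $s_0$ absorbing, $\widehat{P}(s_0, s_0) = 1$. Normalization of $\PB$ makes $\widehat{P}$ stochastic; the law of $(X_k)$ started at $X_0 = s_f$ is exactly $\cP$ read backwards, a trajectory $\tau$ corresponding bijectively to the absorbing path with $X_k = s_{n_\tau + 1 - k}$; and by \Cref{th:fixed_pb} together with \Cref{assumption} the chain is absorbed at $s_0$ in finite, integrable time. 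Hence $\widehat{P}^{\,k}$ converges entrywise to the rank-one matrix all of whose rows equal $\delta_{s_0}$, and for every $s \neq s_0$ we have $\sum_{k \ge 0} \mathbb{P}(X_k = s) = \E_{\tau \sim \cP}\bigl[\sum_{t=0}^{n_\tau+1} \ind\{s_t = s\}\bigr] < \infty$.

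The crux is the short row-vector identity
\begin{equation*}
\cF\,\widehat{P} \;=\; \cF + \cF(s_0)\,\delta_{s_0} - \cF(s_f)\,\delta_{s_f},
\end{equation*}
where $\cF$ now denotes the row vector of state flows. For $s \notin \{s_0, s_f\}$, the $s$-coordinate of $\cF\widehat{P}$ equals $\sum_{s' \in \vout(s)} \cF(s')\PB(s \mid s') = \sum_{s' \in \vout(s)} \cF(s \to s') = \cF(s)$ by detailed balance and \eqref{eq:fm}; the $s_f$-coordinate is $0$ since $s_f$ has no children in $\cG$ and no mass enters it under $\widehat{P}$; and the $s_0$-coordinate is $\cF(s_0) + \sum_{s' \in \vout(s_0)} \cF(s')\PB(s_0 \mid s') = 2\,\cF(s_0)$, from the self-loop plus \eqref{eq:fm} at $s_0$. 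Iterating, and using $\delta_{s_0}\widehat{P} = \delta_{s_0}$, yields $\cF\,\widehat{P}^{\,k} = \cF + k\,\cF(s_0)\,\delta_{s_0} - \cF(s_f)\sum_{j=0}^{k-1} \delta_{s_f}\widehat{P}^{\,j}$. Letting $k \to \infty$, the left-hand side tends to the finite vector $\bigl(\sum_s \cF(s)\bigr)\delta_{s_0}$; comparing $s_0$-coordinates forces $\cF(s_0) = \cF(s_f) = c$ (otherwise the right-hand side's $s_0$-coordinate diverges linearly in $k$, since $\sum_{j<k}\mathbb{P}(X_j \neq s_0)$ converges to the finite expected absorption time), and comparing any coordinate $s \neq s_0$ gives $\cF(s) = c\sum_{j \ge 0}\mathbb{P}(X_j = s) = c\cdot\E_{\tau\sim\cP}\bigl[\sum_{t=0}^{n_\tau+1}\ind\{s_t = s\}\bigr] = \widetilde{\cF}(s)$, while $\cF(s_0) = c = \widetilde{\cF}(s_0)$ (using $\widetilde{\cF}(s_0) = \widetilde{\cF}(s_f)$ from \Cref{th:flow_eqs}). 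Thus $\cF \equiv \widetilde{\cF}$ on states, hence on edges via detailed balance and \Cref{th:flow_eqs}, which completes the proof.

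The one genuine obstacle is the computation of the previous paragraph. The relation $\cF(s) = \sum_{s' \in \vout(s)} \PB(s \mid s')\,\cF(s')$ looks harmonic, but the matrix formed from $\PB$ in this direction is not a Markov transition matrix, so a naive maximum-principle argument over $\cS$ need not close once $\cG$ has cycles. The fix is exactly to reverse the graph so that the relation becomes $\cF\widehat{P} = \cF$ up to boundary corrections supported on $\{s_0, s_f\}$, to recognize $\widehat{P}$ as the absorbing chain of \Cref{th:fixed_pb}, and to pass to the limit in $\cF\widehat{P}^{\,k}$; the finiteness of the expected trajectory length is precisely what makes that limit legitimate.
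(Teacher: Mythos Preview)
Your proof is correct and takes a genuinely different route from the paper's. The paper defines the ratio $C(s) \triangleq \hat{\cF}(s)/\cF(s)$ (where $\hat{\cF}$ is your $\widetilde{\cF}$) and observes that flow matching for both $\cF$ and $\hat{\cF}$ gives
\[
C(s) \;=\; \sum_{s' \in \vout(s)} \frac{\cF(s \to s')}{\cF(s)}\,C(s')\,,
\]
whose weights \emph{do} sum to one by \eqref{eq:fm}; together with the boundary condition $C(s_f)=1$, a maximum/minimum principle over the connected graph then forces $C \equiv 1$. So your closing remark that ``a naive maximum-principle argument \ldots\ need not close'' is true as stated---the coefficients $\PB(s \mid s')$ summed over $s' \in \vout(s)$ are not convex---but the paper's workaround is precisely to pass to the ratio, where the weights become the forward-policy coefficients $\cF(s\to s')/\cF(s)$ and the extremum argument goes through cleanly even with cycles. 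Your matrix-iteration approach is instead constructive: it identifies $\cF(s)$ explicitly as $c$ times the expected occupation of $s$ under the absorbing chain, and recovers $\cF(s_0)=\cF(s_f)$ as a by-product of the limit, rather than arguing uniqueness of a linear system. Both arguments ultimately rest on \Cref{th:fixed_pb} for finiteness; yours additionally uses that $\widehat{P}^{\,k}$ converges from \emph{every} starting state, not only $s_f$, which is implicit in the proof of \Cref{th:fixed_pb} (via the auxiliary chains $\{Y_n\}$) but worth making explicit.
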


\subsection{Forward Policy and Detailed Balance}
It is well-known in acyclic GFlowNets theory \cite{bengio2023gflownet} that there exists a unique forward policy $\PF$ for any backward policy $\PB$ that induces the same probability distribution over $\cT$. The main implication of this fact is that by fixing rewards $\cR(x), x \in \cX$ and a backward policy $\PB(s \mid s')$ for each state $s' \in \cS \setminus \{s_0, s_f\}$, one automatically fixes a trajectory distribution $\cP(\tau)$ that satisfies the reward matching condition~\cite{malkin2022trajectory}. However, sampling from a such distribution is intractable since it requires starting from a terminal state sampled from the reward distribution. Thus, during GFlowNet training, one tries to find a forward policy, which always allows tractable sampling of trajectories by construction, that will match this trajectory distribution $\cP(\tau)$. One can note that this bears similarities with hierarchical variational inference~\cite{malkin2022gflownets}. In the following proposition, we show that this result also holds for non-acyclic GFlowNets.

\begin{proposition}\label{th:pf_db}
Given any backward policy $\PB(s \mid s') > 0$, there exists a unique forward policy $\PF(s' \mid s)$ such that
$$
\cP(\tau) = \prod_{t = 0}^{n_\tau} \PB(s_t \mid s_{t + 1}) = \prod_{t = 0}^{n_\tau} \PF(s_{t+1} \mid s_{t})\eqsp, \; \forall \tau \in \cT.
$$
Moreover, it satisfies the detailed balance conditions
$$
\cF(s)\PF(s' \mid s) = \cF(s')\PB(s \mid s'), \; \forall s \to s' \in \cE
$$
with the state flow $\cF$ defined in \eqref{eq:flow_eqs}. 
\end{proposition}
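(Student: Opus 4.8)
The plan is to define the forward policy explicitly through detailed balance and then verify, in turn, consistency, the trajectory--distribution identity, and uniqueness. For every edge $s \to s' \in \cE$ (so $s \neq s_f$) I would set
\[
\PF(s' \mid s) \;\triangleq\; \frac{\cF(s \to s')}{\cF(s)} \;=\; \frac{\cF(s')\,\PB(s \mid s')}{\cF(s)}\eqsp,
\]
using part 2 of \Cref{th:flow_eqs} for the second equality. This is well defined and strictly positive: $\cF(s \to s') = \cF(s')\PB(s \mid s') > 0$ for every edge, and every non-sink state has an outgoing edge by \Cref{assumption}, so $\cF(s) \geq \cF(s \to s') > 0$. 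That $\PF(\cdot \mid s)$ sums to one over $\vout(s)$ is exactly the flow-matching identity $\cF(s) = \sum_{s' \in \vout(s)} \cF(s \to s')$ of \Cref{th:flow_eqs} (valid for every $s \neq s_f$), divided by $\cF(s)$; and $\cF(s)\PF(s' \mid s) = \cF(s')\PB(s \mid s')$ holds by construction, which is precisely the claimed detailed balance identity.

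The trajectory identity then follows from a telescoping computation. For $\tau = (s_0 \to s_1 \to \ldots \to s_{n_\tau} \to s_f)$ with the convention $s_{n_\tau + 1} = s_f$,
\[
\prod_{t=0}^{n_\tau}\PF(s_{t+1}\mid s_t) = \prod_{t=0}^{n_\tau}\frac{\cF(s_{t+1})\,\PB(s_t\mid s_{t+1})}{\cF(s_t)} = \frac{\cF(s_f)}{\cF(s_0)}\prod_{t=0}^{n_\tau}\PB(s_t\mid s_{t+1}) = \cP(\tau)\eqsp,
\]
since the consecutive state-flow ratios cancel and $\cF(s_0) = \cF(s_f)$ by part 3 of \Cref{th:flow_eqs}.

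Uniqueness is the step I expect to be the main obstacle, because the acyclic argument (finitely many trajectories, a plain induction over prefixes) no longer applies and one must rule out escape to infinity. Let $\PF'$ be any forward policy with $\prod_t \PF'(s_{t+1}\mid s_t) = \cP(\tau)$ for all $\tau \in \cT$. Summing over $\tau \in \cT$ and invoking \Cref{th:fixed_pb} gives $\sum_{\tau \in \cT}\prod_t \PF'(s_{t+1}\mid s_t) = 1$, so the Markov chain generated from $s_0$ by $\PF'$ (with $s_f$ absorbing) reaches $s_f$ almost surely; hence its law on trajectories is a probability measure on $\cT$ agreeing with $\cP$ on every singleton, i.e.\ equal to $\cP$. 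Under this chain I would then compute the expected number of traversals of an edge $s \to s'$: by the Markov property and by interchanging the (nonnegative) sum over $t$ with the expectation, using that the expected number of visits to $s$ is finite by \Cref{th:fixed_pb},
\[
\E_{\tau\sim\cP}\Big[\sum_{t=0}^{n_\tau}\ind\{s_t=s,\,s_{t+1}=s'\}\Big] = \PF'(s'\mid s)\cdot\E_{\tau\sim\cP}\Big[\sum_{t=0}^{n_\tau}\ind\{s_t=s\}\Big]\eqsp.
\]
Since $s \neq s_f$, the right-hand expectation equals $\cF(s)/\cF(s_f)$ and the left-hand side equals $\cF(s \to s')/\cF(s_f)$ by \Cref{def:cyclic_flows}; dividing by $\cF(s) > 0$ forces $\PF'(s'\mid s) = \cF(s \to s')/\cF(s) = \PF(s'\mid s)$, which establishes uniqueness. (An equivalent, more hands-on route is to show by induction on $k$ that the $\PF'$-weight $\prod_{t<k}\PF'(s_{t+1}\mid s_t)$ of every length-$k$ prefix $s_0 \to \ldots \to s_k$ equals the $\cP$-mass of all trajectories extending it --- a quantity independent of $\PF'$, the ``never-terminating'' remainder being null by the almost-sure termination above --- and then to take the ratio of the weights of a prefix ending in $s$ and the same prefix extended by $s'$, such a prefix existing by \Cref{assumption}.)
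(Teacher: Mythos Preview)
Your proof is correct and follows essentially the same approach as the paper: existence via the explicit formula $\PF(s'\mid s)=\cF(s\to s')/\cF(s)$ together with a telescoping product (using $\cF(s_0)=\cF(s_f)$), and uniqueness by computing the expected number of traversals of an edge under the $\PF'$-chain and using the Markov property plus Fubini to identify $\cF(s\to s')=\cF(s)\PF'(s'\mid s)$. Your justification that the $\PF'$-chain reaches $s_f$ almost surely (so that its trajectory law really is $\cP$) is a touch more explicit than the paper's, but the substance of both arguments is the same.
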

The proof is provided in Appendix~\ref{app:pf_db_proof}. Conversely, the following proposition shows that if a triplet $\cF$, $\PF$, $\PB$ satisfies the detailed balance conditions \eqref{eq:db}, it will be consistent with all previous definitions and propositions.
\begin{proposition}\label{th:pf_db_reverse}
Let $\cF\colon \cS \to \mathbb{R}_{>0}$, and let $\PF(s'|s) > 0$, $\PB(s|s') > 0$ be forward and backward policies, such that the detailed balance conditions \eqref{eq:db} are satisfied. Then $\PF$ and $\PB$ induce the same trajectory distribution:
$$
\cP(\tau) = \prod_{t = 0}^{n_\tau} \PB(s_t \mid s_{t + 1}) = \prod_{t = 0}^{n_\tau} \PF(s_{t+1} \mid s_{t})\eqsp, \; \forall \tau \in \cT.
$$
Moreover, then $\cF$ are state flows induced by $\PB$ and $\cF(s_f)$.
\end{proposition}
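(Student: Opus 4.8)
The plan is to reduce everything to \Cref{th:pb_from_flow} and \Cref{th:flow_eqs}, and then derive the trajectory-distribution identity by a short telescoping computation. First I would introduce the edge flow $\cF(s \to s') \triangleq \cF(s)\PF(s' \mid s)$; by the detailed balance conditions \eqref{eq:db} this also equals $\cF(s')\PB(s \mid s')$, and it is strictly positive since $\cF > 0$ and both policies are strictly positive. The first substantive step is to verify that this edge flow satisfies the flow matching conditions \eqref{eq:fm}: summing $\cF(s \to s') = \cF(s)\PF(s' \mid s)$ over $s' \in \vout(s)$ gives $\cF(s)$ because $\PF(\cdot \mid s)$ is a probability distribution over $\vout(s)$, and summing $\cF(s \to s') = \cF(s')\PB(s \mid s')$ over $s \in \vin(s')$ gives $\cF(s')$ because $\PB(\cdot \mid s')$ is a probability distribution over $\vin(s')$; the boundary states $s_0$ and $s_f$ only require the outgoing, resp.\ incoming, identity, which the same computation covers.

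Next I would invoke \Cref{th:pb_from_flow} with this edge flow. The backward policy it produces is $\cF(s \to s') / \sum_{s'' \in \vin(s')} \cF(s'' \to s') = \cF(s \to s') / \cF(s') = \PB(s \mid s')$, i.e.\ it coincides with the given $\PB$, and $\sum_{s'' \in \vin(s_f)} \cF(s'' \to s_f) = \cF(s_f)$, so \Cref{th:pb_from_flow} tells us that our edge flow is precisely the edge flow induced by $\PB$ and final flow $\cF(s_f)$. Applying identity (a) of \Cref{th:flow_eqs} to these induced flows, the induced state flow at every $s \in \cS \setminus \{s_f\}$ equals $\sum_{s' \in \vout(s)} \cF(s \to s') = \cF(s)$ (using the flow matching just established), while at $s_f$ it equals $\cF(s_f)$ by \Cref{def:cyclic_flows}; hence the induced state flow is exactly the given $\cF$, which is the last claim of the proposition. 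In particular, \Cref{th:flow_eqs} now also yields $\cF(s_0) = \cF(s_f)$.

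For the first claim I would argue termwise. Fix $\tau = (s_0 \to s_1 \to \ldots \to s_{n_\tau} \to s_f) \in \cT$. Using $\PF(s_{t+1} \mid s_t) = \frac{\cF(s_{t+1})}{\cF(s_t)} \PB(s_t \mid s_{t+1})$ from \eqref{eq:db} (valid since every $\cF(s_t) > 0$) and telescoping the ratio $\prod_{t=0}^{n_\tau} \frac{\cF(s_{t+1})}{\cF(s_t)} = \frac{\cF(s_f)}{\cF(s_0)} = 1$, one obtains $\prod_{t=0}^{n_\tau} \PF(s_{t+1} \mid s_t) = \prod_{t=0}^{n_\tau} \PB(s_t \mid s_{t+1}) = \cP(\tau)$, as required; summing this identity over $\tau \in \cT$ and using \Cref{th:fixed_pb} additionally confirms that the forward-induced distribution is a genuine probability distribution.

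The main obstacle I anticipate is bookkeeping rather than a deep idea: one must carefully distinguish state flows from edge flows when transferring the conclusions of \Cref{th:pb_from_flow} and \Cref{th:flow_eqs}, and treat the boundary states $s_0, s_f$ separately in the flow matching check. The observation that makes the first claim work is $\cF(s_0) = \cF(s_f)$, which is cleanest to extract from the flow machinery (\Cref{th:flow_eqs}) rather than by attempting to normalize the forward-induced trajectory distribution directly.
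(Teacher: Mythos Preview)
Your proposal is correct and follows essentially the same approach as the paper's proof: define the edge flow $\cF(s \to s') = \cF(s)\PF(s'\mid s)$, verify flow matching via the fact that $\PF$ and $\PB$ are probability distributions together with detailed balance, invoke \Cref{th:pb_from_flow} to identify these as the flows induced by $\PB$ and $\cF(s_f)$, and conclude with the telescoping argument using $\cF(s_0) = \cF(s_f)$ from \Cref{th:flow_eqs}. Your treatment is slightly more explicit about the boundary cases and about recovering the state flow from the edge flow, but the logical skeleton is identical.
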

 For proof, we refer to Appendix~\ref{app:pf_db_reverse_proof}. The above propositions directly generalize their counterparts from the non-acyclic setting \cite{bengio2023gflownet}. Note that, due to the symmetries between $s_0$ and $s_f$ in $\cG$ up to changing direction of edges, we could start from the forward policy and trajectory distribution induced by it in \eqref{eq:backward_distribution}, and then prove the uniqueness of the corresponding backward policy $\PB$. 

\subsection{Training Non-Acyclic GFlowNets}
\label{sec:learn_non_acyclic}

Now, we proceed with the main learning problem in GFlowNets: finding a forward policy that matches the reward distribution over terminal states $\cR(x) / \cZ, x \in \cX$. The following proposition shows how the reward matching condition can be formulated in terms of flows.
\begin{proposition}\label{prop:reward_matching}
    Let $\PB(s \mid s') > 0$ be a backward policy, $\cF(s_f) > 0$ a final flow, and $\cR(x) > 0$ GFlowNet rewards. If edge flows $\cF(s \to s')$ induced by $\PB$ and $\cF(s_f)$ satisfy: 
    \begin{equation}\label{eq:flow_reward_matching}
        \cF(x \to s_f) = \cR(x) \;\; \forall x \to s_f \in \cE\eqsp,
    \end{equation}
    the trajectory distribution $\cP$ induced by $\PB$~\eqref{eq:backward_distribution} satisfies the reward matching condition, i.e. $\P_{\tau \sim \cP}[s_{n_\tau} = x] = \cR(x) / \cZ$. Then, the same trajectory distribution is induced by the unique corresponding forward policy $\PF$, thus also satisfying the reward matching condition.
\end{proposition}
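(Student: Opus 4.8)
The plan is to read off the termination probability $\P_{\tau\sim\cP}[s_{n_\tau}=x]$ directly from \Cref{def:cyclic_flows} and match it to $\cR(x)/\cZ$. The key observation is that $s_f$ has no outgoing edges, so any trajectory $\tau\in\cT$ uses a terminating edge $x\to s_f$ at most once — precisely when it terminates in $x$. Consequently, in the indicator sum defining $\cF(x\to s_f)$, the term $\ind\{s_t=x,\,s_{t+1}=s_f\}$ can be nonzero only for $t=n_\tau$, whence
\[
\E_{\tau\sim\cP}\Bigl[\sum_{t=0}^{n_\tau}\ind\{s_t=x,\,s_{t+1}=s_f\}\Bigr]=\P_{\tau\sim\cP}[s_{n_\tau}=x]\eqsp.
\]
Plugging this into \Cref{def:cyclic_flows} gives $\cF(x\to s_f)=\cF(s_f)\cdot\P_{\tau\sim\cP}[s_{n_\tau}=x]$, and combining with the hypothesis \eqref{eq:flow_reward_matching} yields $\P_{\tau\sim\cP}[s_{n_\tau}=x]=\cR(x)/\cF(s_f)$ for every $x\in\cX$.

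Next I would identify the constant $\cF(s_f)$ with $\cZ$. By \Cref{th:fixed_pb}, $\cP$ is a bona fide probability distribution on $\cT$ and every trajectory has finite length, hence terminates at some $x\in\cX$; summing the previous identity over $x\in\cX$ gives $\sum_{x\in\cX}\cR(x)/\cF(s_f)=1$, i.e. $\cF(s_f)=\sum_{x\in\cX}\cR(x)=\cZ$. (The same equality also follows at once from item~1(b) of \Cref{th:flow_eqs} evaluated at $s_f$ together with the hypothesis.) Therefore $\P_{\tau\sim\cP}[s_{n_\tau}=x]=\cR(x)/\cZ$, which is the reward matching condition for $\cP$.

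It remains to transfer this to the forward policy. \Cref{th:pf_db} provides a unique forward policy $\PF$ with $\prod_{t=0}^{n_\tau}\PF(s_{t+1}\mid s_t)=\cP(\tau)$ for all $\tau\in\cT$; since reward matching is a property of the trajectory distribution alone and $\PF$ induces exactly $\cP$, the forward policy satisfies the reward matching condition as well.

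The argument is short, and the only step that needs care is the first one: recognizing that, unlike at a generic state, the expected number of visits to a terminating edge coincides with its visitation probability. This is exactly what makes the reward-matching bookkeeping survive the passage from ``flows as visitation probabilities'' (acyclic case) to ``flows as expected visit counts'' (non-acyclic case); everything else is a direct invocation of \Cref{def:cyclic_flows}, \Cref{th:fixed_pb}, \Cref{th:flow_eqs}, and \Cref{th:pf_db}.
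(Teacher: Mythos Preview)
Your proof is correct and follows essentially the same approach as the paper: both observe that a terminating edge $x\to s_f$ is visited at most once so that $\cF(x\to s_f)=\cF(s_f)\cdot\P_{\tau\sim\cP}[s_{n_\tau}=x]$, then identify $\cF(s_f)=\cZ$ via the flow matching condition at $s_f$ (or equivalently by summing probabilities), and finally invoke \Cref{th:pf_db} for the forward policy. Your write-up is slightly more explicit about justifying each step, but the argument is the same.
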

\begin{proof}
Notice that an edge leading into $s_f$ can be visited only once; thus, $\cF(x \to s_f)$ coincides with a probability $\P_{\tau \sim \cP}[s_{n_\tau} = x]$ that a trajectory terminates in $x$ times the final flow $\cF(s_f)$. In addition, we have $\cF(s_f) = \sum_{x \in \vin(s_f)} \cF(x \to s_f) = \sum_{x \in \cX} \cR(x) = \cZ$, thus $\P_{\tau \sim \cP}[s_{n_\tau} = x] = \cF(x \to s_f) / \cF(s_f) = \cR(x) / \cZ$.
\end{proof}

Proposition~\ref{prop:reward_matching} also implies $\cF(s_0) = \cF(s_f) = \cZ$ and $\PB(x \mid s_f) = \cR(x) / \cZ$ by Proposition~\ref{th:flow_eqs}.

An important fact from the literature on acyclic GFlowNets~\cite{malkin2022trajectory, bengio2023gflownet} that was overlooked in the work of~\cite{brunswic2024theory}, but holds in the non-acyclic case as well, is that it is generally easy to manually pick a backward policy such that the induced trajectory distribution will satisfy the reward matching condition. A simple and natural choice is to take $\PB(x \mid s_f) = \cR(x) / \cZ$ for $s_f$ and fix $\PB(s \mid s') = 1 / |\vin(s')|$ for all other states. It is worth mentioning that $\cZ$ is generally unknown, but this issue is circumvented in GFlowNets by learning unnormalized flows or making $\cZ$ itself a learnable parameter depending on the chosen loss function~\cite{malkin2022trajectory, bengio2023gflownet, madan2023learning}. Moreover, Proposition~\ref{th:pf_db} shows the uniqueness of the corresponding $\PF$. Thus, we state the main practical corollary of this result:
\begin{corollary}\label{th:fix_pb_learning}
    When a backward policy $\PB > 0$ is fixed, any loss from the acyclic GFlowNet literature~\cite{bengio2021flow, malkin2022trajectory, bengio2023gflownet, madan2023learning} can be used to learn the corresponding forward policy $\PF$ in the non-acyclic case as well. Lemma~\ref{th:fixed_pb} and Proposition~\ref{th:pf_db} imply that there is always a unique solution with a finite expected trajectory length, thus the stability of the loss~\cite{brunswic2024theory} does not play a factor.
\end{corollary}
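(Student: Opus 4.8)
The plan is to read the statement off the uniqueness and finiteness facts already established. First I would fix $\PB > 0$ and encode the reward in the usual way -- either take $\PB(x \mid s_f) = \cR(x)/\cZ$ with final flow $\cF(s_f) = \cZ$, or substitute $\cF(x \to s_f) = \cR(x)$ directly into the loss as in \eqref{eq:DB_loss}. By \Cref{prop:reward_matching} the trajectory distribution $\cP$ of \eqref{eq:backward_distribution} satisfies reward matching, and by \Cref{th:fixed_pb} it is a genuine probability measure on $\cT$ with $\E_{\tau\sim\cP}[n_\tau] < \infty$; since a trajectory visits $n_\tau + 2$ states counted with multiplicity, the expected number of visits to any state is finite, so the flows of \Cref{def:cyclic_flows} are well-defined and, by \Cref{th:flow_eqs}, satisfy the flow matching conditions \eqref{eq:fm}.

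Next, \Cref{th:pf_db} supplies the \emph{unique} forward policy $\PF$ with $\cP(\tau) = \prod_{t=0}^{n_\tau}\PF(s_{t+1}\mid s_t)$ for all $\tau \in \cT$ -- which is the trajectory balance condition \eqref{eq:tb} -- and shows $(\cF,\PF,\PB)$ obeys detailed balance \eqref{eq:db}, hence also \eqref{eq:fm}. So $(\cF,\PF)$, with the reward boundary condition built in, is a loss-zero configuration for each of $\mathcal{L}_{\FM}$, $\mathcal{L}_{\DB}$, $\mathcal{L}_{\TB}$ and its subtrajectory variants, each of which is nonnegative and vanishes exactly on configurations satisfying the corresponding balance identities. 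To see that this configuration is the \emph{only} minimizer, observe that for the losses that take $\PB$ as input -- $\DB$, $\TB$, SubTB -- a loss-zero $(\cF,\PF)$ with $\PF>0$ must satisfy detailed balance against the fixed $\PB$, and then \Cref{th:pf_db_reverse} forces it to be the state flow and forward policy induced by $\PB$ and $\cF(s_f) = \sum_x\cR(x) = \cZ$, the latter being pinned down by the reward boundary condition (for $\TB$ the learnable normalizer is then $\cZ$ as well). Hence the optimization target is a single configuration, and by \Cref{th:fixed_pb} -- equivalently, by the bound \eqref{eq:len_bound} -- its expected trajectory length is finite. In particular no minimizer has probability mass that ``circulates'' on a cycle, which is exactly the behaviour the stability notion of \cite{brunswic2024theory} was introduced to rule out, so once $\PB$ is fixed stability plays no role.

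The step I expect to need the most care is the uniqueness-of-minimizer claim, and especially the case of $\FM$: it is parameterized by edge flows alone and never references a backward policy, so ``fixing $\PB$'' must be read as also constraining the edge flow to the one induced by $\PB$ via \Cref{th:pb_from_flow} -- the classical family of spurious $\FM$ solutions obtained by adding a constant along a directed cycle does change the induced $\PF$ and hence $\E[n_\tau]$, and is excluded only by this reinterpretation. One also has to be careful about which quantities are free for each loss ($\cF$ and $\PF$ for $\FM$ and $\DB$; $\PF$, $\PB$, and $\cZ$ for $\TB$) and to verify that a global optimum forces \emph{all} of the relevant balance identities rather than a subset. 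Everything else is a direct assembly of \Cref{th:fixed_pb}, \Cref{th:flow_eqs}, \Cref{th:pf_db}, \Cref{th:pf_db_reverse} and \Cref{prop:reward_matching}.
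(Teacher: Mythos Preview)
Your proposal is correct and follows the paper's own route: the paper does not give a separate proof of this corollary but simply states it as an immediate consequence of \Cref{th:fixed_pb} (finite expected length for the $\PB$-induced distribution) and \Cref{th:pf_db} (uniqueness of the corresponding $\PF$), which is exactly the skeleton you assemble. Your write-up is in fact more thorough than the paper's, since you also invoke \Cref{th:pf_db_reverse} to argue uniqueness of the loss minimizer and flag the genuine subtlety that $\FM$ does not reference $\PB$ and so ``fixing $\PB$'' must be interpreted as constraining the edge flow --- a point the paper passes over.
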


The main disadvantage of learning with a fixed backward policy in non-acyclic GFlowNets that does not arise in acyclic GFlowNets is the fact that the expected trajectory length $\E_{\tau \sim \cP}[n_{\tau}]$ of a manually chosen $\PB$ can be large. A natural way to circumvent this issue is to consider a learnable backward policy, which is also a widely employed choice in acyclic GFlowNet literature \cite{malkin2022trajectory, jang2024pessimistic,gritsaev2024optimizing}. However, \cite{brunswic2024theory} made an important discovery by pointing out that standard losses from acyclic GFlowNet literature are not stable (Theorem 3), meaning that the expected trajectory length can grow uncontrollably during training. The concept of stability was introduced with respect to learnable edge flows (Definition 3), which implies that the corresponding backward policy also changes during training. Using a stable loss, e.g.~\eqref{eq:StableDB_loss}, was proposed as a way to approach this issue. At the same time, we argue that efficient training of a non-acyclic GFlowNet with controlled expected trajectory length in case of a learnable $\PB$ is possible without utilizing stable losses. The next proposition is a simple corollary of Definition~\ref{def:cyclic_flows}:
\begin{proposition}
\label{th:total_flow}
    Given a trajectory distribution $\cP$, its expected trajectory length is equal to the normalized total flow:
    \begin{equation}
    \label{eq:sample_time_flows_equality}
    \E_{\tau \sim \cP}[n_\tau] = \frac{1}{\cF(s_f)} \sum\limits_{s \in \cS \setminus \{s_0, s_f\}} \cF(s).
    \end{equation}
\end{proposition}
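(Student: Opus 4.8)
The plan is to unwind \Cref{def:cyclic_flows} and recognize the right-hand side of \eqref{eq:sample_time_flows_equality} as $\cF(s_f)$ times the expected number of ``interior'' time steps of a $\cP$-random trajectory, which is exactly $\E_{\tau \sim \cP}[n_\tau]$. First I would plug the state-flow formula from \eqref{eq:flow_eqs} into the sum:
\begin{equation*}
\sum_{s \in \cS \setminus \{s_0, s_f\}} \cF(s) = \cF(s_f) \sum_{s \in \cS \setminus \{s_0, s_f\}} \E_{\tau \sim \cP}\!\left[\sum_{t=0}^{n_\tau + 1} \ind\{s_t = s\}\right].
\end{equation*}
Since every summand is nonnegative, Tonelli's theorem justifies swapping the (finite) sum over states with the expectation and the sum over $t$, yielding $\cF(s_f)\,\E_{\tau \sim \cP}\big[\sum_{t=0}^{n_\tau+1}\sum_{s \in \cS\setminus\{s_0,s_f\}} \ind\{s_t = s\}\big]$.

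Next I would show that for every trajectory $\tau = (s_0 \to s_1 \to \dots \to s_{n_\tau} \to s_{n_\tau+1} = s_f)$ the inner double sum equals exactly $n_\tau$. The $t = 0$ term contributes $0$ since $s_0 \notin \cS \setminus \{s_0, s_f\}$, and the $t = n_\tau + 1$ term contributes $0$ since $s_{n_\tau+1} = s_f \notin \cS \setminus \{s_0, s_f\}$; for each $t \in \{1, \dots, n_\tau\}$ the state $s_t$ lies in $\cS \setminus \{s_0, s_f\}$, so that term contributes exactly $1$. The membership claim is where \Cref{assumption} enters: $s_0$ has no incoming edges, hence $s_t \neq s_0$ for all $t \geq 1$; and $s_f$ has no outgoing edges, hence $s_t \neq s_f$ for $t \leq n_\tau$ (otherwise the trajectory would already have reached the sink). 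Therefore the inner sum collapses to $\sum_{t=1}^{n_\tau} 1 = n_\tau$, the displayed quantity becomes $\cF(s_f)\,\E_{\tau \sim \cP}[n_\tau]$, and dividing by $\cF(s_f) > 0$ gives \eqref{eq:sample_time_flows_equality}.

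I do not expect any serious obstacle here. The only points needing care are the interchange of summation and expectation, which is handled by nonnegativity of the indicators (Tonelli), and the fact that $\E_{\tau \sim \cP}[n_\tau] < \infty$ by \Cref{th:fixed_pb}, which ensures all quantities involved are finite; beyond that, the argument is just the elementary bookkeeping observation that the interior states of a trajectory never coincide with $s_0$ or $s_f$.
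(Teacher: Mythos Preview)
Your proposal is correct and follows essentially the same approach as the paper: both arguments unwind \Cref{def:cyclic_flows} and use that the trajectory length $n_\tau$ equals the total number of visits to states in $\cS \setminus \{s_0, s_f\}$, then swap the sum over states with the expectation. Your version is in fact more careful than the paper's, since you explicitly justify via \Cref{assumption} why the interior states $s_1,\dots,s_{n_\tau}$ cannot equal $s_0$ or $s_f$, and you invoke Tonelli for the interchange; the paper's proof leaves these points implicit.
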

The proof is presented in Appendix~\ref{app:total_flow_proof}.
This result is a refinement of Theorem 2 of \cite{brunswic2024theory}, which states only '$\leq$' inequality in \eqref{eq:sample_time_flows_equality}. Thus, we believe one of our key contributions to be pointing out the following fact:

\colorlet{colorlightblue}{colorblue!35}
\begin{tcolorbox}[colback=colorlightblue,
    colframe=black,
    arc=4pt,
    boxsep=0.3pt,
]Learning a non-acyclic GFlowNet with the smallest expected trajectory length is \textit{equivalent} to learning a non-acyclic GFlowNet with the smallest total flow.
\end{tcolorbox}

We also believe that exploiting this equivalence is a crucial direction for future research on non-acyclic GFlowNets. We further explore a particular solution, which suggests the use of a state flow as a regularizer in the existing GFlowNet losses. We consider an example of the detailed balance loss \DB~\eqref{eq:DB_loss}. In this case, \Cref{th:pf_db_reverse} implies that learning a non-acyclic GFlowNet with the smallest expected trajectory length can be formulated as the following constrained optimization problem:
\begin{align}
\label{eq:opt_cyclic_gflow}
\min\limits_{\cF, \PF, \PB} &\; \sum\limits_{s \in \cS \setminus \{s_0, s_f\}} \cF(s) \\
\text{subject to}&\; \left( \log\frac{\cF(s)\PF(s' \mid s)}{\cF(s')\PB(s \mid s')}\right)^2 = 0\eqsp, & \forall s \to s' \in \cE \eqsp, \notag \\
& \cF(s_f) \PB(x \mid s_f) = \cR(x)\eqsp,&  \forall x \to s_f \in \cE\eqsp.\notag
\end{align}
As an approximate way to solve \eqref{eq:opt_cyclic_gflow}, one can use \DB~\eqref{eq:DB_loss} with \textit{state flow regularization}:
\begin{equation}
\label{eq:RDB_loss}
\left(\log \frac{\cF_{\theta}(s) \PF(s' \mid s, \theta)}{\cF_{\theta}(s')\PB(s \mid s', \theta)} \right)^2 + \lambda \cF_\theta(s) \eqsp,
\end{equation}

where $\lambda > 0$ is a hyperparameter that controls a trade-off between an expected trajectory length and an accuracy of matching the reward distribution. As in~\eqref{eq:DB_loss}, reward matching is enforced by substituting $\cF_{\theta}(s_f)\PB(x  \mid s_f, \theta) = \cR(x)$.

Note that the $\DB$ loss is defined on individual transitions, and during training, it is optimized across transitions collected by a training policy. A standard choice is to optimize it over transitions from trajectories sampled using $\PF$, yet the training policy can be chosen differently, 
or, in RL terms, training can be done in an on-policy or off-policy fashion, see \cite{tiapkin2024generative}. Note that different states $s$ might appear with different frequencies in the loss depending on a training policy, which can lead to
a bias in the optimization problem~\eqref{eq:opt_cyclic_gflow}. We discuss this phenomenon in more detail, as well as potential ways to mitigate it, in Appendix~\ref{app:flow_weighting}.

Finally, it is important to mention that flow-based regularizers in the non-acyclic case were already proposed in Theorem 1 of \cite{brunswic2024theory}, but only for the stable loss setup. Moreover, they were introduced in order to find an acyclic flow. Our paper further explores and sheds new light on this phenomenon, showing that training can be carried out even when an unstable loss is utilized with regularization. Moreover, when the total flow is minimized, one can ensure the smallest possible expected trajectory length. It is also worth pointing out that the idea of introducing a constrained optimization problem to accommodate for cycles in GFlowNets was mentioned in~\cite{deleu2025generative}.

\subsection{Connections with Entropy-Regularized RL}
A recent line of works \cite{tiapkin2024generative,deleu2024discrete} studied connections between GFlowNets and RL, showing that the GFlowNet learning problem is equivalent to an entropy-regularized RL~\cite{neu2017unified, geist2019theory} problem in an appropriately constructed deterministic MDP, given that the backward policy is fixed. We show that the same result holds for non-acyclic GFlowNets as well. 

Let $\cG$ be a graph of a non-acyclic GFlowNet, $\cR$ a GFlowNet reward, and $\PB > 0$ a fixed backward policy that satisfies the reward matching condition. Let $\cF$ be the flow induced by $\PB$ with $\cF(s_f) = \cZ$, and $\PF$ be a unique forward policy corresponding to $\PB$ (see Proposition~\ref{th:pf_db}). Define a deterministic MDP $\cM_{\cG}$ induced by a graph $\cG$, where the state space $\cS$ coincides with vertices of $\cG$, the action space $\cA_s$ for each state $s$ corresponds to $\vout(s)$, and the transition kernel is defined as transition in the graph $\text{P}(s' \mid s, a) = \ind\{a = s'\}$, $a \in \vout(s)$. We use no discounting ($\gamma = 1.0$) and set RL rewards for terminating transitions $r(x, s_f) = \log \cR(x)$, and for all other transitions $r(s, s') = \log \PB(s \mid s')$. Then, the following statement holds.

\begin{theorem}[Generalization of Theorem 1 \cite{tiapkin2024generative}]\label{th:rl_reduction}
    The optimal policy $\pistar_{\lambda=1}(s' \mid s)$ for the entropy-regularized MDP $\cM_{\cG}$ with coefficient $\lambda=1$ is equal to $\PF(s' \mid s)$ for all $s \in \cS \setminus \{ s_f \}, s' \in \cA_s$. Moreover, regularized optimal value $\Vstar_{\lambda=1}(s)$ and Q-value $\Qstar_{\lambda=1}(s,s')$ coincide with $\log \cF(s)$ and $\log \cF(s \to s')$ respectively for all $s \to s' \in \mathcal{E}$.
\end{theorem}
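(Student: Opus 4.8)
The plan is to follow the variational / change-of-measure route used to prove Theorem~1 of \cite{tiapkin2024generative}: I will express the entropy-regularized value of an \emph{arbitrary} policy in $\cM_\cG$ in closed form as $\log\cF(s)$ minus a Kullback--Leibler divergence from the trajectory distribution induced by $\PF$, so that $\PF$ is visibly the unique maximizer, and then read off $\Vstar_{\lambda=1}$, $\Qstar_{\lambda=1}$ and $\pistar_{\lambda=1}$. For a forward policy $\pi$, write $\cP^{\pi}_s$ for the distribution over trajectories from $s$ to $s_f$ induced by $\pi$; I first restrict to policies that reach $s_f$ from $s$ almost surely with finite expected length (call these \emph{proper}, and recall $\PF$ is proper by \Cref{th:pf_db} and \Cref{th:fixed_pb}), for which the regularized value is well-defined and equals $V^\pi(s)=\E_{\tau\sim\cP^\pi_s}\bigl[\sum_{t=0}^{n_\tau}\bigl(r(s_t,s_{t+1})-\log\pi(s_{t+1}\mid s_t)\bigr)\bigr]$.

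The computational core is rewriting the cumulative RL reward along $\tau$. Substituting $r(s,s')=\log\PB(s\mid s')$ on non-terminating edges and $r(x,s_f)=\log\cR(x)$ on terminating ones, and using \Cref{prop:reward_matching} (which gives $\cF(s_f)=\cZ$ and $\PB(x\mid s_f)=\cR(x)/\cZ$, so the $-\log\cZ$ from the last backward factor cancels a $+\log\cZ$), a short manipulation yields $\sum_{t=0}^{n_\tau}r(s_t,s_{t+1})=\log\cZ+\sum_{t=0}^{n_\tau}\log\PB(s_t\mid s_{t+1})$. Telescoping the detailed balance identity $\PF(s'\mid s)/\PB(s\mid s')=\cF(s')/\cF(s)$ from \Cref{th:pf_db} along $\tau$ gives $\prod_{t=0}^{n_\tau}\PF(s_{t+1}\mid s_t)=(\cZ/\cF(s))\prod_{t=0}^{n_\tau}\PB(s_t\mid s_{t+1})$, hence $\sum_{t=0}^{n_\tau}\log\PB(s_t\mid s_{t+1})=\log\cF(s)-\log\cZ+\log\cP^{\PF}_s(\tau)$. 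Combining these and plugging into the value formula (with $\log\cP^\pi_s(\tau)=\sum_t\log\pi(s_{t+1}\mid s_t)$) gives
\[
V^\pi(s)=\log\cF(s)-\mathrm{KL}\bigl(\cP^{\pi}_s\,\big\|\,\cP^{\PF}_s\bigr).
\]

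The rest is bookkeeping. Non-negativity of the KL term, with equality exactly when $\cP^\pi_s=\cP^{\PF}_s$, together with the uniqueness statement of \Cref{th:pf_db}, shows $\PF$ is the unique optimal policy and $\Vstar_{\lambda=1}(s)=V^{\PF}(s)=\log\cF(s)$ for every $s\neq s_f$; taking $s=s_0$ and invoking \Cref{assumption} propagates optimality of $\PF$ to all states. The $Q$-values follow from $\Qstar_{\lambda=1}(s,s')=r(s,s')+\Vstar_{\lambda=1}(s')$: on non-terminating edges this equals $\log\PB(s\mid s')+\log\cF(s')=\log\cF(s\to s')$ by detailed balance, and on terminating edges it equals $\log\cR(x)+\Vstar_{\lambda=1}(s_f)=\log\cR(x)=\log\cF(x\to s_f)$ using the convention $\Vstar_{\lambda=1}(s_f)=0$ and \Cref{prop:reward_matching}. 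Finally $\pistar_{\lambda=1}(s'\mid s)=\exp(\Qstar_{\lambda=1}(s,s')-\Vstar_{\lambda=1}(s))=\cF(s\to s')/\cF(s)=\PF(s'\mid s)$ by detailed balance, while $\sum_{s'\in\vout(s)}\exp(\Qstar_{\lambda=1}(s,s'))=\cF(s)$ by flow matching re-confirms $\Vstar_{\lambda=1}(s)=\log\cF(s)$, so the three objects are mutually consistent.

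The main obstacle is not the algebra above but the \emph{well-posedness} of entropy-regularized control on a \emph{cyclic, undiscounted} MDP --- a non-issue in the acyclic setting of \cite{tiapkin2024generative}, where soft value iteration terminates after finitely many steps. Two points need care. First, $V^\pi(s)$ must be finite for the policies in play: since $\cG$ is finite and $\PB>0$, the rewards $r$ are bounded and per-step entropies are at most $\log|\cS|$, so finiteness reduces to finiteness of $\E[n_\tau]$, supplied by \Cref{th:fixed_pb} and \Cref{th:pf_db}. Second, improper policies (positive probability of never reaching $s_f$) must be excluded as candidate optima: for such a policy the strictly negative reward $\log\PB(s\mid s')<0$ accumulated over cycles drives the objective to $-\infty$, so it cannot beat $\PF$. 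Equivalently, one may argue through the soft Bellman optimality equations, which after exponentiation become a linear system $u=Au+b$ whose unique solvability is exactly invertibility of $I-A$ for the substochastic matrix $A$ of the absorbing chain underlying \Cref{def:cyclic_flows} --- again guaranteed by \Cref{th:fixed_pb}.
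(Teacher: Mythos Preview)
Your proof is correct and shares the paper's core idea of expressing the regularized value as a constant minus a KL divergence, but you organize the argument differently and somewhat more directly. The paper establishes $V^\pi_{\lambda=1}(s_0) = \log\cZ - \operatorname{KL}(\Ptraj{\pi}\Vert\cP)$ only at the initial state, then invokes a separate technical lemma (\Cref{lem:uniqeness_reg_shortest_path}, proved via convex analysis over occupancy measures on the polytope of flow-matching functions) to obtain existence, uniqueness, and the soft Bellman optimality equations, and finally verifies that $\log\cF(s\to s')$ satisfies those equations. You instead telescope detailed balance to prove $V^\pi(s) = \log\cF(s) - \operatorname{KL}(\cP^\pi_s\Vert\cP^{\PF}_s)$ at \emph{every} state $s$, which yields $\Vstar_{\lambda=1}(s)=\log\cF(s)$ immediately and lets you read off $\Qstar$ and $\pistar$ from a single one-step Bellman relation, bypassing the occupancy-measure lemma entirely. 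Both treatments handle improper policies at a comparable level of rigor: the paper packages the issue into a hypothesis of \Cref{lem:uniqeness_reg_shortest_path}, while you argue informally that cycles accumulate strictly negative reward (this is true but relies on \Cref{assumption} to guarantee that every cycle contains a state with at least two parents, a point you could make explicit).
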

The proof and all missing definitions are provided in \Cref{app:rl_reduction_proof}. Note that the proof of \cite{tiapkin2024generative} cannot be directly transferred to the non-acyclic setting since it is based on induction over the topological ordering of vertices of $\cG$, which exists only for acyclic graphs.

\section{Experiments}
\label{sec:experiments}

In addition to verifying our theoretical findings, one of the goals of our experimental evaluation is to examine the \textit{scaling hypothesis} that we put out:

\begin{tcolorbox}[colback=colorlightblue,
    colframe=black,
    arc=4pt,
    boxsep=0.3pt,
]%
\textbf{Scaling hypothesis.} When $\PB$ is trainable, the main factor that plays a crucial role in loss stability in practice, i.e., the controlled mean trajectory length of the trained non-acyclic GFlowNet, is the scale in which the error between flows is computed. Indeed, the standard $\DB$ loss~\eqref{eq:DB_loss} operates in log-flow scale $\Delta \log \cF$, while standard $\SDB$~\eqref{eq:StableDB_loss} operates in flow scale $\Delta \cF$. We hypothesize that using log-flow scale losses without regularization can lead to arbitrarily large trajectory length, while flow scale losses are biased towards solutions with smaller flows and thus do not suffer from this issue.
\end{tcolorbox}

In this section, we use $\DB$ or $\SDB$ to specify the utilized loss, $\Delta \log \cF$ or $\Delta \cF$ to specify the flow scale used to compute the error, and use $\lambda = C$ to specify the strength of the proposed state flow regularization (see Section~\ref{sec:learn_non_acyclic}). For example, $(\DB, \Delta \log \cF)$ in the legend corresponds to~\eqref{eq:DB_loss}, $(\SDB, \Delta \cF)$ corresponds to~\eqref{eq:StableDB_loss} and $(\DB, \Delta \log \cF, \lambda=C)$ corresponds to~\eqref{eq:RDB_loss}. Detailed discussion on loss scaling and stability is provided in \Cref{app:scaling_stability}. 


\begin{figure}[t!]

  \centering
    \includegraphics[width=0.95\linewidth]{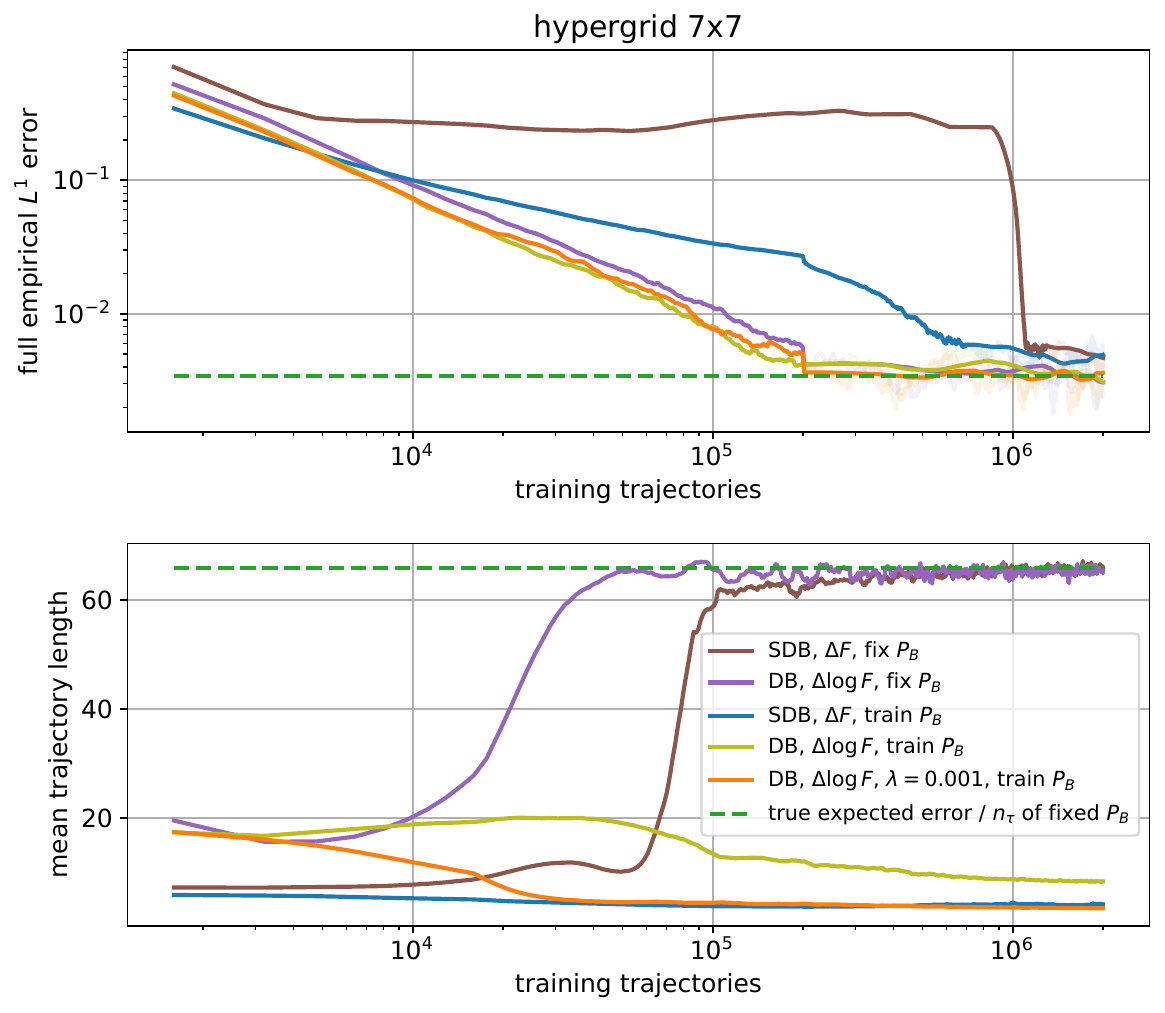} 

  \caption{Comparison of non-acyclic GFlowNet training losses on a small hypergrid environment. We use $\DB$ or $\SDB$ to specify the utilized loss, $\Delta \log \cF$ or $\Delta \cF$ to specify the flow scale used to compute the error in the loss, and use $\lambda = C$ to specify the usage of the proposed state flow regularization. \textit{Top:} evolution of $L^1$ distance between an empirical distribution of samples and target distribution. \textit{Bottom:} evolution of mean length of sampled trajectories.}\label{fig:mol_results}
\label{fig:small_grid}

\end{figure}

\subsection{Experimental Setting}
\label{sec:exp_setup}

We consider two discrete environments for experimental evaluation: 1) a non-acyclic version of the hypergrid environment~\cite{bengio2021flow} that was introduced in~\cite{brunswic2024theory}; 2) non-acyclic permutation generation environment from~\cite{brunswic2024theory} with a harder variant of the reward function. Experimental details are presented in Appendix~\ref{app:exp_details}.

Mean sample reward was used as a metric in~\cite{brunswic2024theory}, with higher values considered better. However, \textit{we point out that this does not always represent sampling accuracy from the reward distribution $\cR/\cZ$}. Indeed, the model that learned to sample from the highest-reward mode still achieves a high average reward despite resulting in mode collapse. For instance, recent works argue that measuring the deviation of mean sample reward from the true expected reward $\sum_{x \in \cX} \cR(x) \frac{\cR(x)}{\cZ}$ results in a better metric, see, e.g., \cite{shen2023towards} for detailed motivation. In addition, we employ other metrics to track sampling accuracy depending on the environment, which we discuss in detail below. 

\begin{figure*}[!t]

    \centering
    \includegraphics[width=0.95\linewidth]{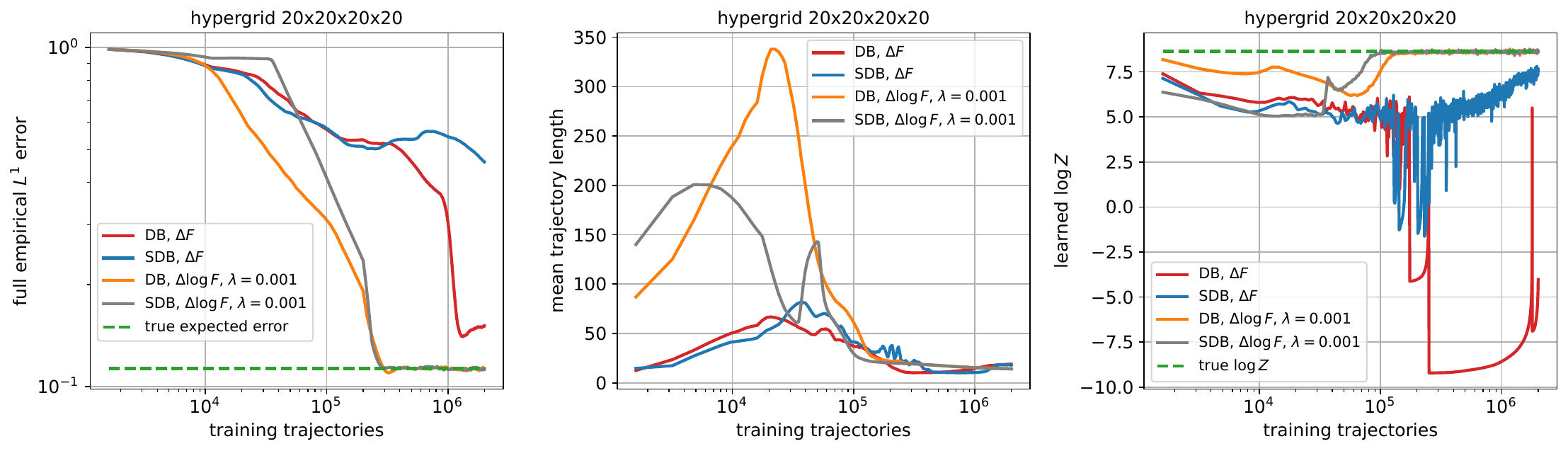}

    \caption{Comparison of non-acyclic GFlowNet training losses on a larger hypergrid environment. We use $\DB$ or $\SDB$ to specify the utilized loss, $\Delta \log \cF$ or $\Delta \cF$ to specify the flow scale used to compute the error in the loss, and use $\lambda = C$ to specify the usage of the proposed state flow regularization. \textit{Left:} evolution of $L^1$ distance between an empirical distribution of samples and target distribution. \textit{Middle:} evolution of mean length of sampled trajectories. \textit{Right:} evolution of the trained initial log flow $\log \cZ_\theta$.} 
\label{fig:big_grid}
\end{figure*}

{\renewcommand{\arraystretch}{1.0}
\setlength{\tabcolsep}{3pt}
\begin{table*}[t]
\caption{Comparison on the permutation environment. $C(k) \; L^1$ is the $L^1$ distance between the true and empirical distribution of fixed point probabilities $C(k)$, $\Delta \cR$ is the relative error of mean reward proposed in \cite{shen2023towards}, $\Delta \log \cZ$ is $|\log \cZ_\theta - \log \cZ|$. Mean and std values are computed over 3 random seeds. \highlight{Blue} indicates the best metric, \highlightr{red} indicates the smallest expected trajectory length.}
\vspace{-0.25cm}
\centering
\begin{center}
\begin{footnotesize}
\begin{tabular}{@{}l|cccc|cccc@{}}
    \toprule
      &
      \multicolumn{4}{c}{$n=8$} & \multicolumn{4}{c}{$n=20$}    \\
     \cmidrule(l){2-9}  
     Loss
               & $C(k) \; L^1 \downarrow$ & $\Delta \cR \downarrow$
               & $\Delta \log \cZ \downarrow$ & $\E[n_\tau]$ 
               & $C(k) \; L^1 \downarrow$ & $\Delta \cR \downarrow$
               & $\Delta \log \cZ \downarrow$ & $\E[n_\tau]$\\
     \midrule
    \DB, $\Delta \cF$
    & $0.215$ \scalebox{0.7}{$\!\pm\!0.198$}
    & $0.214$ \scalebox{0.7}{$\!\pm\!0.086$}
    & $0.814$ \scalebox{0.7}{$\!\pm\!0.826$}
    & \highlightr{$2.43$ \scalebox{0.7}{$\!\pm\!0.28$}}
    & $0.453$ \scalebox{0.7}{$\!\pm\!0.002$}
    & $0.343$ \scalebox{0.7}{$\!\pm\!0.000$}
    & $42.98$ \scalebox{0.7}{$\!\pm\!0.000$}
    & \highlightr{$2.00$ \scalebox{0.7}{$\!\pm\!0.00$}}\\
    \SDB, $\Delta \cF$
    & $0.031$ \scalebox{0.7}{$\!\pm\!0.012$}
    & $0.046$ \scalebox{0.7}{$\!\pm\!0.023$}
    & $0.074$ \scalebox{0.7}{$\!\pm\!0.025$}
    & $3.32$ \scalebox{0.7}{$\!\pm\!0.15$}
    & $0.452$ \scalebox{0.7}{$\!\pm\!0.001$}
    & $0.343$ \scalebox{0.7}{$\!\pm\!0.000$}
    & $42.98$ \scalebox{0.7}{$\!\pm\!0.000$}
    & \highlightr{$2.01$ \scalebox{0.7}{$\!\pm\!0.00$}}\\
    \DB, $\Delta \log \cF$, $\lambda=10^{-3}$
    & $0.036$ \scalebox{0.7}{$\!\pm\!0.015$}
    & $0.056$ \scalebox{0.7}{$\!\pm\!0.024$}
    & $0.018$ \scalebox{0.7}{$\!\pm\!0.010$}
    & {$2.80$ \scalebox{0.7}{$\!\pm\!0.04$}}
    & $0.041$ \scalebox{0.7}{$\!\pm\!0.002$}
    & $0.064$ \scalebox{0.7}{$\!\pm\!0.000$}
    & $0.023$ \scalebox{0.7}{$\!\pm\!0.005$}
    & $3.23$ \scalebox{0.7}{$\!\pm\!0.00$}\\
    \SDB, $\Delta \log \cF$, $\lambda=10^{-3}$
    & $0.037$ \scalebox{0.7}{$\!\pm\!0.013$}
    & $0.056$ \scalebox{0.7}{$\!\pm\!0.019$}
    & $0.020$ \scalebox{0.7}{$\!\pm\!0.015$}
    & {$2.79$ \scalebox{0.7}{$\!\pm\!0.04$}}
    & $0.041$ \scalebox{0.7}{$\!\pm\!0.002$}
    & $0.064$ \scalebox{0.7}{$\!\pm\!0.000$}
    & $0.026$ \scalebox{0.7}{$\!\pm\!0.003$}
    & $3.22$ \scalebox{0.7}{$\!\pm\!0.00$}\\
    \DB, $\Delta \log \cF$, $\lambda=10^{-5}$
    & \highlight{${0.005}$ \scalebox{0.7}{$\!\pm\!0.001$}}
    & \highlight{$0.001$ \scalebox{0.7}{$\!\pm\!0.000$}}
    & \highlight{$0.005$ \scalebox{0.7}{$\!\pm\!0.004$}}
    & $4.31$ \scalebox{0.7}{$\!\pm\!0.05$}
    & $0.017$ \scalebox{0.7}{$\!\pm\!0.002$}
    & $0.035$ \scalebox{0.7}{$\!\pm\!0.002$}
    & \highlight{$0.003$ \scalebox{0.7}{$\!\pm\!0.003$}}
    & $7.55$ \scalebox{0.7}{$\!\pm\!0.50$}\\
    \SDB, $\Delta \log \cF$, $\lambda=10^{-5}$
    & \highlight{$0.005$ \scalebox{0.7}{$\!\pm\!0.001$}}
    & $0.002$ \scalebox{0.7}{$\!\pm\!0.000$}
    & \highlight{$0.006$ \scalebox{0.7}{$\!\pm\!0.006$}}
    & $4.36$ \scalebox{0.7}{$\!\pm\!0.09$}
    & \highlight{$0.014$ \scalebox{0.7}{$\!\pm\!0.001$}}
    & \highlight{$0.025$ \scalebox{0.7}{$\!\pm\!0.001$}}
    & \highlight{$0.005$ \scalebox{0.7}{$\!\pm\!0.005$}}
    & $7.31$ \scalebox{0.7}{$\!\pm\!0.07$}\\
    \bottomrule
    \end{tabular}
\end{footnotesize}
\end{center}
\vspace{-0.25cm}
\label{perms_table}
\end{table*}}

In both environments, we consider two settings: training with a fixed backward policy $\PB$ that is almost uniform and using a trainable $\PB$. In the second case, the initial log flow $\log \cF_{\theta}(s_0) = \log \cZ_{\theta}$ is also being learned. Thus, we can examine its convergence to the logarithm of the true normalizing constant $\log \cZ$. See Appendix~\ref{app:fix_and_learn_pb} for details.

\subsection{Hypergrids}
\label{sec:grids}

We start with non-acyclic hypergrid environments~\cite{brunswic2024theory}. These environments are small enough that the normalizing constant $\cZ$ can be computed exactly, and the trained sampler can be efficiently evaluated against the exact reward distribution. States are points with integer coordinates $s \in \{0,\ldots,H-1\}^{D}$ inside a $D$-dimensional hypercube with side length $H$, plus two auxiliary states $s_0$ and $s_f$. Possible transitions correspond to increasing or decreasing any coordinate by $1$ without exiting the grid. Moreover, each state has a terminating transition $s \to s_f$, thus $\cX = \cS \setminus \{s_0, s_f\}$. The reward function has modes near the grid corners, separated by wide troughs with very small rewards. To measure sampling accuracy, total variation distance is computed between the true reward distribution $\cR(x) / \cZ$ and an empirical distribution of the last $2 \cdot 10^5$ samples seen in training, which coincides with $\frac{1}{2}$ of the $L^1$ distance on discrete domains.

We begin our analysis with a $7 \times 7$ grid to study the effects of learning under a fixed backward policy compared to a trainable backward policy. Since the environment is small, it is possible to find the flows induced by the fixed $\PB$ exactly, thus also its expected trajectory length, see Appendix~\ref{app:small_env_solution}. Figure~\ref{fig:small_grid} presents the results. First, we note that both $(\DB, \Delta \log \cF)$ and $(\SDB, \Delta \cF)$ with fixed $\PB$ converge to the true expected trajectory length induced by the fixed backward policy, which is in line with Corollary~\ref{th:fix_pb_learning}. However, for all losses, using trainable $\PB$ allows us to find a solution with a smaller trajectory length. In addition, we observe that using a loss in $\Delta \cF$ scale results in slower convergence and a slight bias in the learned forward policy than in the case of $\Delta \log \cF$ scale, for both fixed and learned $\PB$. Finally, an interesting note is that using an unstable $\DB$ loss in $\Delta \log \cF$ scale without state flow regularization \textit{can} still result in a small expected trajectory length, as we see in this experiment. However, we further show that this is not the case for a larger environment.

Next, we consider a larger $20 \times 20 \times 20 \times 20$ hypergrid. An expected trajectory length induced by the chosen fixed backward policy is several orders of magnitude larger than for a smaller grid, making this approach impractical. While one can try to manually find a fixed $\PB$ with a smaller expected trajectory length, this is generally a challenging problem, thus, we consider only the setting of trainable $\PB$ here. Our findings are presented in Figure~\ref{fig:big_grid}. Similarly to $7 \times 7$ grid, we find that learning in $\Delta \cF$ scale results in a biased policy both for $\DB$ and $\SDB$, and this bias is noticeably larger than in the smaller grid. In $\Delta \log \cF$ scale, both $\DB$ and $\SDB$ employed with state flow regularization learn to correctly sample from the reward distribution. While all methods converge to similar expected trajectory length, $\Delta \cF$ scale losses have smaller $n_\tau$ in the middle of the training even when employed without regularization, which supports our scaling hypothesis. In addition, Figure~\ref{fig:big_grid_app} in \Cref{app:add_plots} shows that for both losses in $\Delta \log \cF$ scale, a mean length of sampled trajectories tends to infinity when the training is done without state flow regularization. Finally, we note that $\Delta \log \cF$ losses correctly learn the true normalizing constant $\cZ$, while $\Delta \cF$ losses perform worse.

\subsection{Permutations}
\label{sec:perms}

Next, we consider the environment corresponding to the Cayley Graph of the symmetric group $\mathrm{S}_n$ (group of permutations on $n$ elements $\{1, 2, \dots, n\}$) from~\cite{brunswic2024theory}. Each state $s \in \cS \setminus \{s_0, s_f\}$ is a permutation of fixed length $(s(1), \dots, s(n))$, and there are $n - 1$ possible transitions that correspond to swapping a pair of adjacent elements $s(k)$ and $s(k+1)$, plus a transition that corresponds to a circular shift of the permutation to the right $(s(n), s(1), \dots, s(n-1))$. In addition, each state has a terminating transition $s \to s_f$. GFlowNet reward utilized in the experiments of~\cite{brunswic2024theory} is $\mathbb{I}[s(1) = 1]$. We argue that this results in a fairly simple task, and a trivial forward policy exists that just applies circular shift until $1$ is in the first position. We opt for using a more complex reward distribution in our experiments and define GFlowNet reward in terms of the number of fixed points in a permutation $\cR(s) = \exp\left(\frac{1}{2}\sum_{k=1}^n\mathbb{I}\{s(k) = k\}\right)$. 

Since with the growth of $n$, the number of states $n!$ quickly becomes too large to compute total variation distance as it was done for hypergrid, we track convergence of a number of statistics to their true respective values. Firstly, we compute the relative error between the mean reward of GFlowNet samples and the true expected reward as it was proposed in~\cite{shen2023towards}. Secondly, denote $C(k)$ as the probability that a permutation sampled from the reward distribution has $k$ fixed points. We compute the $L^1$ error between the vector $(C(0), C(1), \dots, C(n))$ and its empirical estimate over the last $10^5$ samples seen in training. Finally, we track the convergence of the trained $\log \cZ_{\theta}$ to the true value of $\log \cZ$. In Appendix~\ref{app:exp_perms_vals}, we show how true reference values of these quantities can be efficiently computed.

Table~\ref{perms_table} presents the results for $n=8$ and $n=20$. Here, $\PB$ is trained in all cases. While for $n=8$, the environment is still relatively small, $n=20$ results in a more challenging environment with $\approx 2.4 \cdot 10^{18}$ states, thus the trained neural network needs to generalize to states unseen during training in order to match the reward distribution. Firstly, we note that while $\Delta \cF$ scale losses can learn the reward distribution to some capacity for $n=8$, they fail for $n=20$. However, in all cases, they converge to small $\E[n_\tau]$, supporting our scaling hypothesis. On the other hand, we find that GFlowNets training with $\Delta \log \cF$ losses and state flow regularization converges to low values of reward distribution approximation errors for both $n=8$ and $n=20$. In addition, we see that using a smaller regularization coefficient $\lambda$ on the one hand results in a model with a larger expected trajectory length, but on the other hand, results in a model that better matches the reward distribution. Finally, we perform the same experiment as for hypergrids (Figure~\ref{fig:small_grid}) with a fixed $\PB$ compared to a trainable $\PB$ on small permutations of length $n=4$, and make similar observations to the ones presented in Section~\ref{sec:grids}. The results are presented in Figure~\ref{fig:small_perms_app} in \Cref{app:add_plots}. 

\subsection{Discussion}
\label{sec:exp_disc}

The key observations from our experimental evaluation are: 
\begin{tcolorbox}[colback=colorlightblue,
    colframe=black,
    arc=4pt,
    boxsep=0.3pt,
]%
    \begin{enumerate}[itemsep=-2pt,leftmargin=5pt]
        \item Learning with a fixed $\PB$ is possible without stable losses and regularization, however, manually picking $\PB$ with small $\E[n_\tau]$ is challenging;
        \item When $\PB$ is trained, our results empirically support the scaling hypothesis, showing that even the standard $\DB$ in $\Delta \cF$ scale is stable; however, non-acyclic GFlowNets trained with $\Delta \cF$ scale losses often fail to accurately match the reward distribution; 
        \item Both $\DB$ and $\SDB$ in $\Delta \log \cF$ scale result in better matching the reward distribution but need to be utilized with state flow regularization to ensure small expected trajectory length $\E[n_\tau]$.
    \end{enumerate}
\end{tcolorbox}



\section{Conclusion}
\label{sec:conclusion}

In our paper we extended the theoretical framework of GFlowNets to encompass non-acyclic discrete environments, revisiting and simplifying the previous constructions by \cite{brunswic2024theory}. In addition, we provided a number of theoretical insights regarding backward policies and the nature of flows in non-acyclic GFlowNets, generalized known connections between GFlowNets training and entropy-regularized RL to this setting, and experimentally re-examined the importance of the concept of loss stability proposed in \cite{brunswic2024theory}.

Future work could explore applying other losses from acyclic GFlowNet literature~\cite{madan2023learning, da2024divergence, hu2024beyond} to the non-acyclic setting. Based on Theorem~\ref{th:rl_reduction}, another promising direction is to apply known RL techniques and algorithms to GFlowNets in the non-acyclic case, following their success for acyclic GFlowNets~\cite{tiapkin2024generative, mohammadpour2024maximum, lau2024qgfn, morozov2024improving}. Finally, environments where all states are terminal, i.e., have a transition into $s_f$, naturally arise in the non-acyclic case. Then, special modifications can be introduced to improve the propagation of the reward signal during training~\cite{deleu2022bayesian, pan2023better, jang2024learning}.

\newpage


\section*{Acknowledgements}

We would like to thank Leo Maxime Brunswic for the helpful discussion and providing implementation details of the paper~\cite{brunswic2024theory}. This work was supported by the Ministry of Economic Development of the Russian Federation (code 25-139-66879-1-0003). This research was supported in part through computational resources of HPC facilities at HSE University~\citep{kostenetskiy2021hpc}.

\section*{Impact Statement}
This paper presents work whose goal is to advance the field of Machine Learning. There are many potential societal consequences of our work, none which we feel must be specifically highlighted here.

\bibliography{bibliography}
\bibliographystyle{icml2025}

\newpage
\appendix
\onecolumn




\section{Proofs}\label{app:proofs}

\subsection{Proof of Lemma~\ref{th:fixed_pb}}\label{app:pb_proof}

Consider a random walk on $\cG$ with reversed edges transition probabilities given by backward policy. Specifically, we define a Markov chain $\{X_n\}_{n=0}^\infty$, such that $X_0 = s_f$ a.s. and $\P[X_t = s \mid X_{t-1} = s'] = \PB(s \mid s')$. Let also $\P[X_t = s_0 \mid X_{t-1} = s_0] = 1$, i.e., $s_0$ is an absorbing state. We want to show that
\begin{enumerate}
    \item The random walk terminates at $s_0$ with probability 1: $\P[\exists t : X_t = s_0 ] = 1$;
    \item The expected length of a walk is finite: $\E[n_{\tau}] = \E[\sum_{t=0}^{\infty} \ind\{X_t \not = s_0 \}] < \infty$.
\end{enumerate}
In particular, the first statement implies that $\cP(\cdot)$ is a correct probability measure over finite trajectories since for any $\tau = (s_0,s_1\ldots,s_{n_{\tau}}, s_f)$ it holds
\[
    \P[(X_{n_{\tau}+1},\ldots,X_{0}) = \tau] = \prod_{t=0}^{n_\tau} \PB(s_{t} \mid s_{t+1}) = \cP(\tau)\,,
\]
and we have
\[
    \sum_{\tau \in \cT} \P[(X_{n_{\tau}+1},\ldots,X_{0}) = \tau] = \P[\exists \tau \in \cT:  (X_{n_{\tau}+1},\ldots,X_{0}) = \tau] = \P[\exists t: X_t = s_0]\,,
\]
since the events $\{(X_{n_{\tau}+1},\ldots,X_{0}) = \tau\}$ do not intersect for different $\tau$. 

First consider any intermediate state $s \in \cS \setminus \{s_0, s_f\}$ and define a Markov chain $\{Y_n\}_{n=0}^\infty$ with the same transition probabilities as $\{X_n\}_{n=0}^\infty$, with $Y_0 = s$ a.s. We define $p_s \triangleq \P[\exists t > 0: Y_t = s]$, i.e. the probability that $\{Y_n\}$ returns to $s$. First, notice that $p_s < 1$. Indeed, based on our assumptions on $\cG$, there exists at least one path $\tau$ from $s_0$ to $s$, and furthermore, there exists such a path without cycles. In this case, we have
\[
    \P[(Y_{n_{\tau}+1},\ldots,Y_{0}) = \tau] = \prod_{t=0}^{n_\tau} \PB(s_{t} \mid s_{t+1}) > 0
\]
by the condition on $\PB(s \mid s') > 0$ for $s \to s' \in \cE$. Notice that $\{ (Y_{n_{\tau}+1},\ldots,Y_{0}) = \tau\} \cap \{ \exists t > 0: Y_t = s\} = \emptyset$, since if the trajectory of the random walk has already reached $s_0$, it will never return to $s$. Thus, $p_s = \P[\exists t > 0: Y_t = s] < 1$. 

Next, for each state $s \in \cS \setminus \{s_0, s_f\}$ we define $N_s \triangleq \sum_{t=0}^\infty \ind\{ X_t = s \}$ as the number of visits of a state $s$ encountered by the original process. Also, denote by $N'_s \triangleq \sum_{t=0}^\infty \ind\{ Y_t = s\}$ the number of visits of a state $s$ during the backward random walk that starts at $s$. We notice that 
\[
\E[N_{s}] = \P(\exists t > 0: X_t = s) \E[N_{s}'] \leq \E[N_{s}'] \eqsp, 
\]
where the first equation is due to the strong Markov property. At the same time, we have $\P[N'_s > k] = \P[\exists n_1, \ldots, n_k > 0: Y_{n_j} = s ]$ and, by Markov property, we have $\P[N'_s > k] = p_s^k$. Thus
\[
    \E[N'_s] = \sum_{k=0}^\infty \P[N'_s > k] = \sum_{k=0}^\infty p_s^k = \frac{1}{1-p_s} < +\infty\,.
\]
Finally, we have
\[
    \E[n_{\tau}] = \E\left[\sum_{t=0}^\infty \ind\{X_t \not = s_0\}\right] = \E\left[\sum_{s \in \cS \setminus\{s_0, s_f\}} \sum_{t=0}^\infty \ind\{X_t = s\}\right] = \sum_{s \in \cS \setminus\{s_0, s_f\}} \E[N_s] \leq  \sum_{s \in \cS \setminus\{s_0, s_f\}} \E[N_s'] < +\infty\,.  
\]

The first statement of the Lemma directly follows from the finiteness of the expected length of the walk, because otherwise it has an infinite length with non-zero probability, leading to a contradiction. 

\subsection{Proof of Proposition~\ref{th:flow_eqs}}\label{app:flow_eq_proof}
First, we prove the flow matching conditions. We have
\begin{equation*} 
\cF(s \to s')  =\cF(s_f)\;\E_{\tau }\left[ \sum_{t = 0}^{n_{\tau}} \ind\{s_t = s, s_{t + 1} = s'\}\right], \quad \cF(s)  = \cF(s_f)\;\E_{\tau }\left[ \sum_{t = 0}^{n_{\tau}+1} \ind\{s_t = s\}\right].
\end{equation*}
Next, note that the following equations hold for any trajectory $\tau$ and any $s \in \cS \setminus \{s_0, s_f\}$:
$$
    \ind\{s_t = s\} = \sum_{s'' \in \vin(s)} \ind \{s_{t - 1} = s'', s_{t} = s\} = \sum_{s' \in \vout(s)} \ind\{s_t = s, s_{t + 1} = s'\}.
$$
Then for $s \in \cS \setminus \{s_0\}$:
\begin{equation*} 
\begin{split}
\cF(s) & = \cF(s_f)\E_{\tau} \left[  \sum_{t = 1}^{n_{\tau}+1} \sum_{s'' \in \vin(s)} \ind\{s_{t - 1} = s'', s_{t} = s\}  \right]  \\
 & = \sum_{s'' \in \vin(s)} \cF(s_f)\E_{\tau} \left[  \sum_{t = 1}^{n_{\tau}+1} \ind\{s_{t - 1} = s'', s_{t} = s\} \right] = \sum_{s'' \in \vin(s)} \cF(s'' \to s).
\end{split}
\end{equation*}
Similarly for any $s \in \cS \setminus \{s_f\}$:
\begin{equation*} 
\begin{split}
\cF(s) & = \cF(s_f)\E_{\tau} \left[  \sum_{t = 0}^{n_{\tau}} \sum_{s' \in \vout(s)} \ind\{s_{t} = s, s_{t+1} = s'\}  \right]  \\
 & = \sum_{s' \in \vout(s)} \cF(s_f)\E_{\tau} \left[  \sum_{t = 0}^{n_{\tau}} \ind\{s_{t} = s, s_{t+1} = s'\}  \right] = \sum_{s' \in \vout(s)} \cF(s \to s'). \\
\end{split}
\end{equation*}
Next, we prove the key relation $\cF(s \to s') = \cF(s')\PB(s \mid s')$. Let $\{X_n\}_{n=0}^\infty$ be the Markov chain defined in the proof of \Cref{th:fixed_pb}, corresponding to a random walk on $\cG$ with reversed edges and transition probabilities given by $\PB$. We have
\begin{equation*}
\begin{split}
\cF(s \to s') & = \cF(s_f)\E_{\tau} \left[  \sum_{t = 0}^{n_{\tau}} \ind\{s_{t} = s, s_{t+1} = s'\}  \right] = \cF(s_f)\E \left[  \sum_{t = 0}^{+\infty} \ind\{X_{t} = s', X_{t+1} = s\}  \right] \\
& \overset{(a)}{=} \cF(s_f)  \sum_{t = 0}^{+\infty} \P(X_{t} = s', X_{t+1} = s)  = \cF(s_f)  \sum_{t = 0}^{+\infty} \P(X_{t} = s') \P(X_{t+1} = s \mid X_t = s') \\
& \overset{(b)}{=} \cF(s_f)  \sum_{t = 0}^{+\infty} \P(X_{t} = s') \PB(s \mid s') = \cF(s_f)   \E \left[  \sum_{t = 0}^{+\infty} \ind\{X_{t} = s'\}  \right] \PB(s \mid s') \\
& = \cF(s') \PB(s \mid s').\\
\end{split}
\end{equation*}
Here in (a) we used the fact that 
\[
\E\left[\sum_{t = 0}^{+\infty} \ind\{X_{t} = s', X_{t+1} = s\}\right] \leq \E[n_{\tau}] < \infty
\]
due to \Cref{th:fixed_pb}, so the expectation is finite and we can apply Fubini's theorem. In (b) we used the Markov property of $\{X_n\}_{n=0}^\infty$. Finally, ${\cF}(s_0) = \cF(s_f)$ by definition \eqref{eq:flow_eqs}, since each trajectory $\tau \sim \cP$ visits $s_0$ exactly once.

\subsection{Proof of Proposition~\ref{th:pb_from_flow}}\label{app:pb_from_flow_proof}

Since $\cF(s \to s')$ satisfies the flow matching conditions, we define
$$\cF(s) = \sum\limits_{s' \in \vout(s)} \cF(s \to s') = \sum\limits_{s'' \vin(s)} \cF(s'' \to s).$$
Next, take $\PB(s \mid s') = \cF(s \to s') / \cF(s')$. Let us denote $\hat{\cF}$ to be flows from Definition~\ref{def:cyclic_flows} that are induced by $\PB$ and $\cF(s_f)$ (which correspond to expected number of visits with respect to the trajectory distribution $\cP$ induced by $\PB$). We aim to prove that $\cF$ and $\hat{\cF}$ coincide. 

By Proposition~\ref{th:flow_eqs} and definition of $\PB$, we have
\[
\hat{\cF}(s \to s') = \hat{\cF}(s')\PB(s \mid s') = \frac{\hat{\cF}(s')}{\cF(s')}\cF(s \to s') = C(s')\cF(s \to s')\,,
\]
where we denote $C(s) = \hat{\cF}(s) / \cF(s)$. In addition, by Proposition~\ref{th:flow_eqs}, $\hat{\cF}$ satisfies the flow matching conditions, thus
$$\hat{\cF}(s) = \sum\limits_{s' \in \vout(s)} \hat{\cF}(s \to s') = \sum\limits_{s'' \vin(s)} \hat{\cF}(s'' \to s).$$
Combining these statements, for any $s \in \cS \setminus \{s_f\}$ we have
$$
\hat{\cF}(s) = C(s)\cF(s) = \sum\limits_{s' \in \vout(s)} C(s')\cF(s \to s').
$$
The first equation is by definition of $C(s)$ and the second equation is due to the flow matching conditions. Thus we have a system of linear equations with respect to $C(s)$:
$$
\forall s \in \cS \setminus \{s_f\}, \; \sum\limits_{s' \in \vout(s)} C(s')\cF(s \to s') - C(s)\cF(s) = 0.
$$
In addition, $\hat{\cF}(s_f)$ is equal to, by definition, $\cF(s_f)$ multiplied by the expected number of times $\tau \sim \cP$ visits $s_f$, where the latter is $1$, so we have $\hat{\cF}(s_f) = \cF(s_f)$, thus an additional equation is $C(s_f) = 1$. In total, we have $|\cS|$ variables and $|\cS|$ equations, and are interested in strictly positive solutions. Firstly, there exists a trivial solution $C(s) = 1$ for each $s \in \cS$, which is an only constant solution since $C(s_f) = 1$. 

Next, suppose there exists a non-constant solution $C'(s)$. Denote $\cS_{\operatorname{max}} = \operatorname{argmax}\limits_{s \in \cS} C'(s)$, which will be a proper subset of $\cS$. Let us consider two cases. First, suppose $s_f \not\in \cS_{\operatorname{max}}$. Let $\tau = \left(s_0 \to s_1 \to \ldots \to s_{n_{\tau}} \to s_f\right)$ be any trajectory that visits some state in $\cS_{\operatorname{max}}$. Then there exists an index $t \le n_\tau$ such that $s_t \in \cS_{\operatorname{max}}$ and $s_{t+1} \not\in \cS_{\operatorname{max}}$. Then we have
$$
1 = \frac{\sum_{s' \in \vout(s_t)} C(s')\cF(s_t \to s') }{C(s_t)\cF(s_t)} < \frac{\sum_{s' \in \vout(s_t)} C(s_t)\cF(s_t \to s') }{C(s_t)\cF(s_t)} = \frac{\sum_{s' \in \vout(s_t)} \cF(s_t \to s') }{\cF(s_t)} = 1.
$$
The inequality is due to three facts: (i) $C(s) > 0 \; \forall s \in \cS$, (ii) $s_t \in \cS_{\operatorname{max}}$, and thus $C(s_t) \geq C(s')$ for any $s' \in \cS$,  and (iii) the inequality is strict for at least one edge $s_t \to s_{t+1}$ such that $C'(s_{t+1}) < C'(s_t)$, and it implies a contradiction.

Second, suppose $s_f \in \cS_{\operatorname{max}}$. Then, denote $\cS_{\operatorname{min}} = \operatorname{argmin}\limits_{s \in \cS} C'(s)$, which will be a proper subset of $\cS$. Similarly to the previous case, let $\tau$ be any trajectory that visits some state in $\cS_{\operatorname{min}}$. Then there exists an index $t \le n_\tau$ such that $s_t \in \cS_{\operatorname{min}}$ and $s_{t+1} \not\in \cS_{\operatorname{min}}$. Then we have
$$
1 = \frac{\sum_{s' \in \vout(s_t)} C(s')\cF(s_t \to s') }{C(s_t)\cF(s_t)} > \frac{\sum_{s' \in \vout(s_t)} C(s_t)\cF(s_t \to s') }{C(s_t)\cF(s_t)} = \frac{\sum_{s' \in \vout(s_t)} \cF(s_t \to s') }{\cF(s_t)} = 1.
$$
Thus, in this case, there is also a contradiction. Therefore $C(s) = 1$ is a unique solution, meaning that $\hat{\cF}(s) = \cF(s)$. Finally for any $s \to s' \in \cE$
$$
\hat{\cF}(s \to s') = C(s')\cF(s \to s') = \cF(s \to s').
$$
$\cF$ and $\hat{\cF}$ coincide, thus the proposition is proven.

\subsection{Proof of Proposition~\ref{th:pf_db}}\label{app:pf_db_proof}

Let us proof existence and uniqueness of a corresponding forward policy. Let $\cF$ be the flow induced by the backward policy~\eqref{def:cyclic_flows}.


\textbf{Uniqueness.} Suppose that such a forward policy $\PF$ exists, then the probability distributions over $\cT$ induced by $\PB$ and $\PF$ coincide. Moreover, since for any edge $s \to s' \in \cE$, we have $\PB(s \mid s') > 0$ and the probability distribution over trajectories coincide, we get that $\PF(s' \mid s) > 0$. Similarly to the proofs of \Cref{th:fixed_pb} and \Cref{th:flow_eqs}, define a Markov chain $\{X_n\}_{n=0}^\infty$ corresponding to a random walk on $\cG$ in the forward direction with transition probabilities given by $\PF$. Formally, $X_0 = s_0$ a.s., $\P[X_t = s' \mid X_{t-1} = s] = \PF(s' \mid s)$, and $\P[X_t = s_f \mid X_{t-1} = s_f] = 1$. 

Similarly to the proof of \Cref{th:flow_eqs}, we have
\begin{equation*}
\begin{split}
\cF(s \to s') & = \cF(s_f)\E_{\tau} \left[  \sum_{t = 0}^{n_{\tau}} \ind\{s_{t} = s, s_{t+1} = s'\}  \right] = \cF(s_f)\E \left[  \sum_{t = 0}^{+\infty} \ind\{X_{t} = s, X_{t+1} = s'\}  \right] \\
& \overset{(a)}{=} \cF(s_f)  \sum_{t = 0}^{+\infty} \P(X_{t} = s, X_{t+1} = s')  = \cF(s_f)  \sum_{t = 0}^{+\infty} \P(X_{t} = s) \P(X_{t+1} = s' \mid X_t = s) \\
& \overset{(b)}{=} \cF(s_f)  \sum_{t = 0}^{+\infty} \P(X_{t} = s) \PF(s' \mid s) = \cF(s_f)   \E \left[  \sum_{t = 0}^{+\infty} \ind\{X_{t} = s\}  \right] \PF(s' \mid s) \\
& = \cF(s) \PF(s' \mid s).\\
\end{split}
\end{equation*}

In (a) we used the finiteness of the expected value to apply Fubini's theorem, and in (b) we used the Markov property of $\{X_n\}_{n=0}^\infty$. Then, combined with \Cref{th:flow_eqs}, we have $\cF(s') \PB(s \mid s') = \cF(s)\PF(s' \mid s)$. Thus, $\PF(s' \mid s) = \cF(s') \PB(s \mid s') / \cF(s)$, concluding the proof of uniqueness.

\textbf{Existence.} Take 
$$\PF(s' \mid s) = \frac{\cF(s') \PB(s \mid s')}{\cF(s)} = \frac{\cF(s \to s')}{\cF(s)}.$$ 
This is always a valid probability distribution since $\cF(s) = \sum_{s' \in \vout(s)} \cF(s \to s')$. Next, for any $\tau \in \cT$ we have 
$$\prod_{t = 0}^{n_\tau} \PB(s_t \mid s_{t + 1}) = \prod_{t = 0}^{n_\tau} \frac{\cF(s_t \to s_{t + 1})}{\cF(s_{t+1})} = \frac{\prod_{t = 0}^{n_\tau} \cF(s_t \to s_{t + 1})}{\prod_{t = 0}^{n_\tau} \cF(s_{t+1})}.$$
where the first equation is due to Proposition~\ref{th:flow_eqs}. By Proposition~\ref{th:flow_eqs} we also have $\cF(s_0) = \cF(s_f)$. Then 
$$\frac{\prod_{t = 0}^{n_\tau} \cF(s_t \to s_{t + 1})}{\prod_{t = 0}^{n_\tau} \cF(s_{t+1})} = \frac{\cF(s_0)}{\cF(s_f)}\frac{\prod_{t = 0}^{n_\tau} \cF(s_t \to s_{t + 1})}{\prod_{t = 0}^{n_\tau} \cF(s_{t})} = \frac{\prod_{t = 0}^{n_\tau} \cF(s_t \to s_{t + 1})}{\prod_{t = 0}^{n_\tau} \cF(s_{t})} = \prod_{t = 0}^{n_\tau} \PF(s_{t+1} \mid s_{t}).$$
Thus the existence is proven. Finally, the detailed balance conditions follow from the proof of uniqueness presented above.

\subsection{Proof of Proposition~\ref{th:pf_db_reverse}}\label{app:pf_db_reverse_proof}

Consider an edge function $\bar{\cF}(s \to s') = \cF(s)\PF(s' \mid s)$. It is positive since $\cF(s) > 0$ and $\PF(s' \mid s) > 0$ by the statement of the proposition. Since $\PF(\cdot \mid s)$ is a valid probability distribution over $\vout(s)$, we have 
$$\sum_{s' \in \vout(s)}\bar{\cF}(s \to s') = \sum_{s' \in \vout(s)}\cF(s)\PF(s' \mid s) = \cF(s)\sum_{s' \in \vout(s)}\PF(s' \mid s) = \cF(s).$$
Similarly, since $\PB(\cdot \mid s)$ is a valid probability distribution over $\vin(s)$, and $\cF$, $\PF$ and $\PB$ satisfy the detailed balance conditions, we have
$$\sum_{s'' \in \vin(s)}\bar{\cF}(s'' \to s) = \sum_{s'' \in \vin(s)}\cF(s'')\PF(s \mid s'') = \sum_{s'' \in \vin(s)}\cF(s)\PB(s'' \mid s) = \cF(s)\sum_{s'' \in \vin(s)}\PB(s'' \mid s) = \cF(s).$$
Thus $\bar{\cF}$ satisfies the flow matching conditions. In addition 
$$
\PB(s \mid s') = \frac{\cF(s)\PF(s' \mid s)}{\cF(s')} = \frac{\bar{\cF}(s \to s')}{\cF(s')} = \frac{\bar{\cF}(s \to s')}{\sum_{s'' \in \vin(s')}\bar{\cF}(s'' \to s')}.
$$

Thus, applying Proposition~\ref{th:pb_from_flow} to $\bar{\cF}$, we get that it is an edge flow induced by $\PB$, thus $\cF$ is also the state flow induced by $\PB$ and $\cF(s_f)$.

Next, consider any trajectory $\tau = (s_0, s_1, \dots, s_{n_\tau}, s_f) \in \cT$. By the detailed balance conditions we have
$$
\prod_{t = 0}^{n_\tau} \PB(s_t \mid s_{t + 1}) = \prod_{t = 0}^{n_\tau} \frac{\cF(s_t)\PF(s_{t+1} \mid s_t)}{\cF(s_{t+1})} = \frac{\cF(s_0)}{\cF(s_f)} \prod_{t = 0}^{n_\tau} \PF(s_{t+1} \mid s_t) = \prod_{t = 0}^{n_\tau} \PF(s_{t+1} \mid s_t).
$$

The final equation is due to the fact that state flow $\cF$ is induced by $\PB$ and $\cF(s_f)$, so we have $\cF(s_f) = \cF(s_0)$ by Proposition~\ref{th:flow_eqs}. Thus the proposition is proven.

\subsection{Proof of Proposition~\ref{th:total_flow}}\label{app:total_flow_proof}

We first note that not including $s_0$ and $s_f$ in the sum is just a matter of the definition of trajectory length presented in~\ref{sec:background_gflow}, where we do not count the first and the final state towards it. Using the fact that the length of a trajectory is the sum of the numbers of visits to each individual state in the graph, we obtain that
$$
\E_{\tau \sim \cP}[n_\tau] =\E_{\tau }\left[ \sum_{s \in \cS \setminus \{s_0, s_f\}} \sum_{t = 0}^{n_{\tau}+1} \ind\{s_t = s\}\right] = \sum_{s \in \cS \setminus \{s_0, s_f\}} \E_{\tau }\left[  \sum_{t = 0}^{n_{\tau}+1} \ind\{s_t = s\}\right] = \sum\limits_{s \in \cS \setminus \{s_0, s_f\}} \frac{\cF(s)}{\cF(s_f)}.
$$

\subsection{Entropy-Regularized RL and Theorem~\ref{th:rl_reduction}}\label{app:rl_reduction_proof}

\paragraph{Background on Entropy-Regularized RL.}

Let $\cM_{\cG}$ be a deterministic MDP induced by a graph $\cG$ with a state space $\cS$ corresponding to vertices of $\cG$, the action space $\cA_s$ for each state $s$ corresponds to outgoing edges of $s$, associated with $\vout(s)$, and let $\lambda > 0$ be a regularization coefficient. We define a policy $\pi$ as a mapping from each state $s \in \cS$ to a probability measure $\pi(\cdot|s)$ over $\cA_s$. 

Then, for any policy $\pi$, we define the regularized value function for all $s \not = s_f$ as follows
\begin{equation}\label{eq:reg_value_function_def}
    V^{\pi}_{\lambda}(s) \triangleq \E_{\tau \sim \Ptraj{\pi}}\left[ \sum_{t=0}^{n_{\tau}} r(s_t, s_{t+1}) + \lambda \cH(\pi(\cdot| s_t))  \mid s_0 = s \right] \,,
\end{equation}
and $V^{\pi}_{\lambda}(s_f) = 0$, where $\cH$ is Shannon entropy, $\Ptraj{\pi}$ is a trajectory distribution induced by the following the policy $\pi$: $s_t \sim \pi(\cdot | s_{t-1})$  for all $t \geq 1$ and the starting state $s_0$ is fixed as $s$ (not to be confused with the initial state in $\cG$), and $n_{\tau}$ is a length of trajectory defined as $n_{\tau} = \min\{ k \geq 0 \mid s_{k+1} = s_f\}$. Overall, it is not clear if the value function is a well-defined function when no discounting is used ($\gamma = 1$). We call this problem a \textit{regularized shortest path} problem, akin to shortest path and stochastic shortest path problem \citep[Chapter 3]{bertsekas2012dynamic}. A policy $\pistar$ is called optimal if it maximizes $V^{\pi}_{\lambda}(s_0)$.

\begin{lemma}\label{lem:uniqeness_reg_shortest_path}
    Assume that (i) a graph $\cG$ satisfies \Cref{assumption} and (ii) for any $s \in \cS, s' \in \cA_s$ it holds $r(s,s') \leq 0$ and $r(s,s') = 0$ if and only if $|\vin(s')| = 1$. Also, assume that for any optimal policy $\pistar$ it holds $\E_{\pistar}[n_\tau] < +\infty$.
    
    Then, a regularized shortest path problem admits a unique solution, and the value of its solution satisfies soft optimal Bellman equations
    \begin{equation}\label{eq:reg_optimal_bellman}
        \Qstar_{\lambda}(s,s') \triangleq r(s,s') + \Vstar(s')\,, \qquad \Vstar_{\lambda}(s) \triangleq \lambda \log\left( \sum_{s' \in \vout(s)} \exp\left\{ \frac{1}{\lambda} \Qstar_{\lambda}(s,s') \right\} \right)\,,
    \end{equation}
    where the optimal policy can be derived as $\pistar(a|s) \propto \exp\{1/\lambda \cdot \Qstar_{\lambda}(s,a)\}$. 
\end{lemma}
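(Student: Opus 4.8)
The plan is to read the statement as an undiscounted \emph{regularized shortest-path} problem and to prove it with the Bellman-operator toolkit of stochastic shortest path theory \citep[Chapter 3]{bertsekas2012dynamic}, organised so that it never uses a topological ordering of $\cG$ (which need not exist). On functions $V\colon\cS\to\mathbb{R}$ with the convention $V(s_f)=0$, define the \emph{soft Bellman optimality operator}
\begin{equation*}
(\mathcal{B}V)(s)\;\triangleq\;\lambda\log\!\Big(\textstyle\sum_{s'\in\vout(s)} e^{(r(s,s')+V(s'))/\lambda}\Big),\qquad s\in\cS\setminus\{s_f\},
\end{equation*}
and, for a policy $\pi$, the \emph{evaluation operator} $(\mathcal{B}^{\pi}V)(s)\triangleq\sum_{s'\in\vout(s)}\pi(s'\mid s)\big(r(s,s')+V(s')-\lambda\log\pi(s'\mid s)\big)=g^{\pi}(s)+(P^{\pi}V)(s)$, where $P^{\pi}$ is the \emph{sub}-stochastic transition matrix on $\cS\setminus\{s_f\}$ induced by $\pi$ (mass leaks out along terminating edges into $s_f$, where $V\equiv 0$). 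Two elementary facts drive the whole proof: (i) the variational identity $\mathcal{B}V=\max_{\pi}\mathcal{B}^{\pi}V$, with the maximum attained precisely at the softmax policy $\pi_V(s'\mid s)\propto e^{(r(s,s')+V(s'))/\lambda}$ (nonnegativity of the Kullback--Leibler divergence); and (ii) $\mathcal{B}$ and every $\mathcal{B}^{\pi}$ are monotone in $V$.

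The hypotheses enter through a single structural fact. Combining $r(s,s')=0\iff|\vin(s')|=1$ with \Cref{assumption}: every directed cycle of $\cG$ passes through a state with at least two parents (otherwise, since $s_0$ has no incoming edge, that cycle would be unreachable from $s_0$), hence contains a strictly negative-reward edge; as $r\le 0$ everywhere, \emph{every directed cycle of $\cG$ has strictly negative total reward}. This is the replacement for discounting. Its role: for a \emph{proper} policy $\pi$ ($\E_{\pi}[n_\tau]<\infty$ from every state), the chain it induces is a finite absorbing chain whose non-absorbing block $P^{\pi}$ has spectral radius strictly below $1$, so $\mathcal{B}^{\pi}$ is a contraction in a suitable weighted supremum norm; hence $V^{\pi}_{\lambda}=(I-P^{\pi})^{-1}g^{\pi}$ is its unique finite fixed point and $(\mathcal{B}^{\pi})^{k}V\to V^{\pi}_{\lambda}$ for every $V$.

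The argument then follows the classical template. First, the greedy policy $\pi_V$ of any \emph{finite} $V$ is proper: all its probabilities are positive by (i), so by \Cref{assumption} (each state having a path to $s_f$) absorption at $s_f$ occurs with positive probability from every state, and in a finite chain this forces almost-sure absorption in finite expected time. Let $\mathcal{V}^{\ast}\triangleq\sup_{\pi\ \mathrm{proper}}V^{\pi}_{\lambda}$ (componentwise); the cycle-cost property together with the standing assumption that optimizers are proper is used here to force $\mathcal{V}^{\ast}$ to be finite and to coincide with the optimal value, hence attained. A one-step policy-improvement computation then shows $\mathcal{B}^{\pi'}\mathcal{V}^{\ast}=\mathcal{B}\mathcal{V}^{\ast}\ge\mathcal{V}^{\ast}$ for $\pi'\triangleq\pi_{\mathcal{V}^{\ast}}$; since $\pi'$ is proper, iterating monotonicity of $\mathcal{B}^{\pi'}$ gives $V^{\pi'}_{\lambda}\ge\mathcal{V}^{\ast}$, whence $V^{\pi'}_{\lambda}=\mathcal{V}^{\ast}$, $\mathcal{B}\mathcal{V}^{\ast}=\mathcal{B}^{\pi'}\mathcal{V}^{\ast}=\mathcal{V}^{\ast}$, and $\pistar=\pi'$; this is exactly \eqref{eq:reg_optimal_bellman}. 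For uniqueness, let $V=\mathcal{B}V$ be any fixed point: its greedy policy $\pi_V$ is proper and satisfies $\mathcal{B}^{\pi_V}V=V$, so $V=V^{\pi_V}_{\lambda}\le\mathcal{V}^{\ast}$; conversely $V=\mathcal{B}V\ge\mathcal{B}^{\pi}V$ for every proper $\pi$, and iterating monotonicity gives $V\ge(\mathcal{B}^{\pi})^{k}V\to V^{\pi}_{\lambda}$, so $V\ge\mathcal{V}^{\ast}$. Hence $V=\mathcal{V}^{\ast}$, the solution is unique, and its greedy policy is the softmax, as claimed.

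The main obstacle is the absence of discounting: $\mathcal{B}$ is not a supremum-norm contraction, so neither existence of a fixed point, nor convergence of value iteration, nor properness of the greedy policy of a fixed point is automatic. The crux is therefore turning ``$r=0\iff|\vin|=1$'' together with \Cref{assumption} into the quantitative statement that proper policies give genuinely contractive $\mathcal{B}^{\pi}$, after which the assumed properness of the optimizers pins $\mathcal{V}^{\ast}$ down and the sandwiching argument delivers uniqueness. (In the companion proof of \Cref{th:rl_reduction} I would instead guess the solution: reparametrise through the backward policy $\PB(s\mid s')\propto e^{r(s,s')/\lambda}$, obtain flows $\cF$ and the unique forward policy $\PF$ from \Cref{th:pf_db}, and verify by telescoping the detailed-balance relation $\cF(s)\PF(s'\mid s)=\cF(s')\PB(s\mid s')$ along a trajectory that $\log\cF$ solves \eqref{eq:reg_optimal_bellman} with greedy policy $\PF$; uniqueness still requires the argument above.)
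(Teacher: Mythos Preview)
Your approach is correct and genuinely different from the paper's. You work in \emph{value space} with the soft Bellman operator and rely on the stochastic-shortest-path machinery (monotonicity, sub-stochastic $P^{\pi}$ with spectral radius $<1$ for proper $\pi$, sandwich argument for fixed-point uniqueness). The paper instead works in \emph{occupancy-measure space}: it rewrites $V^{\pi}_{\lambda}(s_0)$ as a function of $d^{\pi}(s,s')=\E_{\pi}[n_{\tau}(s,s')]$ on the flow-matching polytope $\cK$, argues that the objective (linear reward term plus conditional-entropy regulariser) is strictly concave on $\cK$, and obtains uniqueness of $d^{\star}$ and hence of $\pistar$ directly from convex optimisation. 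Both proofs then meet at the same policy-improvement step to derive the soft Bellman equations. What each route buys: the paper's LP view gives uniqueness of $\pistar$ in one line (strict concavity) and shows cleanly that $\pistar(s'|s)>0$ by a gradient/KKT argument, and it dovetails with the rest of the paper since $d^{\pi}$ is exactly the flow of \Cref{def:cyclic_flows} in the reversed graph; your operator view is closer to the target statement \eqref{eq:reg_optimal_bellman} and yields fixed-point uniqueness, which the paper does not address.

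One small imprecision worth fixing: contractiveness of $\mathcal{B}^{\pi}$ for a proper $\pi$ follows purely from the absorbing-chain structure (spectral radius of the non-absorbing block $<1$) and has nothing to do with rewards; the cycle-cost fact you extract from ``$r=0\iff|\vin|=1$'' is needed elsewhere, namely to argue that $\mathcal{V}^{\ast}$ is finite and attained. That step is the one place where your sketch (and, to be fair, the paper's) is a bit thin: you invoke ``cycle-cost property together with the standing assumption that optimizers are proper'' without an explicit bound. In the paper's application this is supplied externally (via the identity $V^{\pi}_{\lambda=1}(s_0)=\log\cZ-\operatorname{KL}(\Ptraj{\pi}\Vert\cP)\le\log\cZ$), so both proofs are on the same footing. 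Your closing parenthetical about guessing the solution through $\PB$ and \Cref{th:pf_db} is exactly how the paper proceeds in the proof of \Cref{th:rl_reduction}.
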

\begin{proof}
    Let us define a number of visits of a vertex $s$ and an edge $s,s'$ in $\cG$ on a given trajectory $\tau$ as $n_\tau(s) = \sum_{t=0}^\infty \ind\{s_t = s\}$ and $n_\tau(s, s') = \sum_{t=0}^\infty \ind\{s_t = s, s_{t+1} = s'\}$. In the analogy with occupancy measures in RL, we employ the notation $d^{\pi}(s) \triangleq \E_{\pi}[n_\tau(s)]$ and $d^{\pi}(s,s') \triangleq \E_{\pi}[n_\tau(s,s')]$ for an expected number of visits. This definition also corresponds to the flow function in the reversed graph (see \Cref{def:cyclic_flows}) with the "backward policy" $\pi$. The condition on expected trajectory length of optimal policies implies that we can consider only policies $\pi$ such that $d^{\pi}(s) < +\infty$ for any $s \in \cS \setminus\{s_0,s_f\}$, and, as a result, $d^{\pi}(s,s') < +\infty$.

    Next, we rewrite the value function in the initial state as follows
    \[
        V^{\pi}_{\lambda}(s_0) = \sum_{s\in \cS \setminus \{s_f\}}\sum_{s' \in \vout(s)} d^{\pi}(s,s') r(s,s') + \lambda \sum_{s\in \cS} d^{\pi}(s) \cH(\pi(\cdot | s)) \,.
    \]
    Then, we notice that $d^{\pi}(s,s') = \pi(s'|s) \cdot d^{\pi}(s)$ thus we can rewrite the value in the following form
    \[
       V^{\pi}_{\lambda}(s_0) =  \sum_{s\in  \cS \setminus \{s_f\}}\sum_{s' \in \vout(s)} d^{\pi}(s,s') r(s,s') - \underbrace{\lambda \sum_{s\in \cS} \sum_{s' \in \vout(s)} d^{\pi}(s,s') \log\left( d^{\pi}(s,s') / \sum_{s' \in \vout(s)} d^{\pi}(s,s') \right)}_{R(d^{\pi})}\,.
    \]
    As a function of $d^{\pi}(s,s')$, we see that the first term in the expression above is linear whereas the second one is relative conditional entropy \cite{neu2017unified} that is strongly concave. Given that the set of all admissible $d^{\pi}(s,s')$ is a polytope that is defined as a family of negative functions that satisfies the flow matching conditions (see \Cref{th:flow_eqs})
    \[
        \cK \triangleq \left\{ d \colon \cS \times \cS \to \mathbb{R}_+ \bigg| \sum_{s' \in \vout(s)} d^{\pi}(s,s') = \sum_{s'' \in \vin(s)} d^{\pi}(s'', s), \sum_{s' \in \vout(s_0)}d(s_0, s') = 1, \sum_{s'' \in \vin(s_f)} d(s'',s_f) = 1  \right\},
    \]
    where the flow and policy have one-to-one corresponds due to \Cref{th:pb_from_flow} in the reversed graph. Since the set $\cK$ is a polytope, optimization of $V^{\pi}_{\lambda}(s_0)$ over occupancy measures is a strictly convex problem and thus admits a unique solution $d^\star$ that corresponds to a unique policy $\pistar$.
    
    Before proving the optimal Bellman equations, we want to show that $\pistar$ satisfies $\pistar(s'|s) > 0$ for any $s \in \cS, s' \in \vout(s)$. To do it, we explore the gradients of the regularizer, using computations done in \cite{neu2017unified}, Section A.4:
    $
        \frac{\partial R(d^\pi)}{\partial d^\pi(s,s')} = \log \pi(s'|s)\,.
    $
    In particular, it implies that as $\pi(s'|s) \to 0$, then $\norm{\nabla_{d^{\pi}} \partial R(d^\pi)} \to +\infty$, thus the optimal policy $\pistar$ cannot satisfy $\pistar(s'|s) = 0$ since it will violate the optimality conditions.

    Next, we want to prove that the value of the optimal policy satisfies soft optimal Bellman equations. First, we notice that the usual Bellman equations still hold since $n_{\tau}$ is a stopping time
    \[
        Q^{\pi}_{\lambda}(s,s') = r(s,s') + V^{\pi}_{\lambda}(s'), \qquad V^{\pi}_{\lambda}(s) = \sum_{s' \in \vout(s)} \pi(s'|s) Q^{\pi}_\lambda(s,s') + \lambda \cH(\pi(\cdot | s))\,,
    \]
    with an additional initial condition $V^{\pi}_{\lambda}(s_f) = 0$, by the proprieties of conditional expectation. Let us consider a regularized policy improvement operation, defined as
    \[
        \pi'(\cdot|s) \triangleq \arg\max_{p}  \left\{ \sum_{s' \in \vout(s)} p(s') Q^{\pi}_\lambda(s,s') + \lambda \cH(p) \right\} \propto \exp\left\{ \frac{1}{\lambda} Q^{\pi}_{\lambda}(s, \cdot) \right\} \,.
    \]
    Then we want to show that $V^{\pi'}_{\lambda}(s_0) \geq V^{\pi}_{\lambda}(s_0)$ if the policy $\pi$ is positive: $\pi(s'|s) > 0$ for all $s \in \cS, s'\in \vout(s)$. 
    We start from a general inequality that holds for any $s \in \cS$
    \begin{align*}
        V^{\pi'}_{\lambda}(s) - V^{\pi}_{\lambda}(s) &= \left(\sum_{s' \in \vout(s)} \pi'(s'|s) Q^{\pi'}_\lambda(s,s') + \lambda \cH(\pi'(\cdot | s))\right)
        -  \left( \sum_{s' \in \vout(s)} \pi(s'|s) Q^{\pi}_\lambda(s,s') + \lambda \cH(\pi(\cdot | s))\right) \\
        &= \left(\sum_{s' \in \vout(s)} \pi'(s'|s) Q^{\pi'}_\lambda(s,s') + \lambda \cH(\pi'(\cdot | s))\right) - \left(\sum_{s' \in \vout(s)} \pi'(s'|s) Q^{\pi}_\lambda(s,s') + \lambda \cH(\pi'(\cdot | s))\right) \\
        &+ \left(\sum_{s' \in \vout(s)} \pi'(s'|s) Q^{\pi}_\lambda(s,s') + \lambda \cH(\pi'(\cdot | s))\right) 
         -\left(\sum_{s' \in \vout(s)} \pi(s'|s) Q^{\pi}_\lambda(s,s') + \lambda \cH(\pi(\cdot | s))\right) \\
         &\geq \sum_{s' \in \vout(s)} \pi'(s'|s) \left[ Q^{\pi'}_{\lambda}(s,s') - Q^{\pi}_{\lambda}(s,s') \right] = \sum_{s' \in \vout(s)} \pi'(s'|s) \left[ V^{\pi'}_{\lambda}(s') - V^{\pi}_{\lambda}(s') \right]\,.
    \end{align*}
    After $t$ rollouts, we have
    \[
        V^{\pi'}_{\lambda}(s_0) - V^{\pi}_{\lambda}(s_0) \geq \E_{\tau \sim \Ptraj{\pi'}} \left[ V^{\pi'}_{\lambda}(s_t) - V^{\pi}_{\lambda}(s_t)  \right]\,,
    \]
    Since the policy $\pi$ is positive, then \Cref{th:fixed_pb} in the reversed graph implies that $d^{\pi}(s,s'), d^{\pi}(s) < +\infty$ and thus all values and Q-values are finite: $Q^{\pi}(s,s') > -\infty$ for any $s \in \cS, s' \in \vout(s)$. It implies that $\pi'$ is also positive. Thus, its trajectories are finite with probability 1 and yields $V^{\pi'}_{\lambda}(s_0) \geq V^{\pi}_{\lambda}(s_0)$. Finally, applying policy improvement to $\pistar$ we conclude the statement.
\end{proof}

\begin{proof}[Proof of Theorem~\ref{th:rl_reduction}.]
Let $\cP$ be the trajectory distribution induced by the GFlowNet backward policy and $\Ptraj{\pi}$ be the trajectory distribution induced by RL policy $\pi$. Then we rewrite the value function \eqref{eq:reg_value_function_def} in the following form using the tower property of conditional expectation to replace entropy with negative logarithm of the policy
$$
    V^{\pi}_{\lambda=1}(s_0) = \E_{\tau \sim \Ptraj{\pi}} \left[ \sum_{t = 0}^{n_\tau} r(s_t, s_{t + 1}) - \log \pi(s_{t+1} \mid s_t) \right]\,.
$$
Notice that there is no coefficient in front of entropy and reward because we set $\gamma = 1, \lambda=1$ by the theorem statement. Using simple algebraic manipulations
$$
    V^{\pi}_{\lambda=1}(s_0) = \E_{\tau \sim \Ptraj{\pi}} \left[ \sum_{t = 0}^{n_\tau} \log\exp(r(s_t, s_{t + 1})) - \log \pi(s_{t + 1} \mid s_t)\right] = \E_{\tau \sim \Ptraj{\pi}} \left[ \log  \frac{\prod_{t = 0}^{n_\tau} \exp (r(s_t, s_{t + 1}))}{\prod_{t = 0}^{n_\tau} \pi(s_{t + 1} \mid s_t)}\right]\,.
$$
Next, we notice that $r(s,s') = \log \PB(s|s')$ for all non-terminal $s'$ and, $r(s,s_f) = \log \cR(s) = \log \PB(s \mid s_f) + \log \cZ$ for terminal transitions due to the reward matching condition. Thus,
\[
    V^{\pi}_{\lambda=1}(s_0)  = \log \cZ -\E_{\tau \sim \Ptraj{\pi}} \left[ \log  \frac{\prod_{t = 0}^{n_\tau} \pi(s_{t + 1} \mid s_t)}{\prod_{t = 0}^{n_\tau} P_B(s_t \mid s_{t + 1})}\right] = \log \cZ -\operatorname{KL}(\Ptraj{\pi} \vert \cP)\,.
\]
Here $\cP$ is a trajectory distribution induced by $\PB$~\eqref{eq:backward_distribution}. We note that the final equation is the same as the one in Proposition 1 of~\cite{tiapkin2024generative} for the acyclic case.

Thus, an optimal policy $\pistar$ that maximizes $V^{\pi}_{\lambda=1}(s_0)$ is the one that minimizes $\operatorname{KL}(\Ptraj{\pi} \vert \cP)$. By \Cref{th:fixed_pb}, the expected trajectory length of any optimal policy $\pistar$ that matches $\cP$ is finite. Thus, by Proposition~\ref{th:pf_db}, there exists a unique forward policy $\PF$ that induces the same trajectory distribution as $\PB$, which is equivalent to achieving zero KL-divergence. Thus, $\pistar$ coincides with $\PF$, and we conclude the statement by the uniqueness of the solution (see \Cref{lem:uniqeness_reg_shortest_path}). To apply \Cref{lem:uniqeness_reg_shortest_path}, without loss of generality, we can assume that the GFlowNet reward function $\cR$ is normalized, i.e., $\cZ = 1$ and $\log \cR(x) < 0$. Indeed, since a terminating transition $x \to s_f$ is always visited exactly once, it is equivalent to subtracting $\log \cZ$ from all terminal rewards, which does not change the optimal policy and modifies all values by the same constant.

Next, consider soft optimal Bellman equations \eqref{eq:reg_optimal_bellman} for non-terminating transitions
$$
\Qstar_{\lambda=1}(s, s') = \log \PB(s \mid s') + \log \sum_{s'' \in \vout(s')} \exp(\Qstar_{\lambda=1}(s', s''))\,.
$$
Let us show that $\Qstar_{\lambda=1}(s, s') = \log \cF(s \to s')$ will satisfy the equations.
\begin{equation*} 
\begin{split}
\log \cF(s \to s')  = \log\cF(s') + \log \PB(s \mid s')& =   \log\sum_{s'' \in \vout(s')} \cF(s' \to s'') + \log \PB(s \mid s')  \\
& = \log \PB(s \mid s') + \log \sum_{s'' \in \vout(s')} \exp\left(\log \cF(s' \to s'')\right).
\end{split}
\end{equation*}
Here we used equations from Proposition~\ref{th:flow_eqs}. For terminating transitions we simply have $\Qstar_{\lambda=1}(s, s_f) = r(s, s_f) = \log \cR(s) = \log \cF(s \to s_f)$. Since there exists a unique solution to soft optimal Bellman equations, we have proven $\Qstar_{\lambda=1}(s, s') = \log \cF(s \to s')$. As for state flows, we have
$$
\Vstar_{\lambda=1}(s) = \log \sum_{s' \in \vout(s)} \exp(\Qstar_{\lambda=1}(s, s')) =  \log \sum_{s' \in \vout(s)} \exp\left(\log \cF(s \to s')\right) =  \log \cF(s)\,.
$$
Thus the proof is concluded.
\end{proof}
\section{Algorithmic Details}\label{app:algo_details}

\subsection{Training Policy and Flow Weighting}\label{app:flow_weighting}

Recall the optimization problem in~\eqref{eq:opt_cyclic_gflow}:
\begin{align*}
\label{eq:opt_cyclic_gflow}
\min\limits_{\cF, \PF, \PB} &\; \sum\limits_{s \in \cS \setminus \{s_0, s_f\}} \cF(s) \\
\text{subject to}&\; \left( \log\frac{\cF(s)\PF(s' \mid s)}{\cF(s')\PB(s \mid s')}\right)^2 = 0\eqsp, & \forall s \to s' \in \cE \eqsp, \notag \\
& \cF(s_f) \PB(x | s_f) = \cR(x)\eqsp,&  \forall x \to s_f \in \cE\eqsp.\notag
\end{align*}

Now, suppose that training with $\DB$ loss~\eqref{eq:DB_loss} and state flow regularization~\eqref{eq:RDB_loss} is done on-policy, i.e. trajectories are collected using the trained policy $\PF$. Let us write down the expected gradient of the loss summed over a trajectory (note that regularization is not applied to $\cF(s_0)$ and $\cF(s_f)$)
$$
\E_{\tau \sim \PF}\left[\sum_{t = 0}^{n_\tau} \nabla_\theta \left(\log \frac{\cF_{\theta}(s) \PF(s_{t+1} \mid s_t, \theta)}{\cF_{\theta}(s_{t+1})\PB(s_t \mid s_{t+1}, \theta)} \right)^2 + \sum_{t = 1}^{n_\tau} \lambda \nabla_\theta \cF_\theta(_t)\right],
$$
which can be rewritten as
$$
\E_{\tau \sim \PF}\left[\sum_{t = 0}^{n_\tau} \nabla_\theta \mathcal{L}_{\mathrm{DB}}(s_t \to s_{t+1}) \right] + \lambda \E_{\tau \sim \PF}\left[\sum_{t = 1}^{n_\tau} \nabla_\theta \cF_\theta(s_t) \right].
$$

The first term is the expected gradient of the standard $\DB$ loss. As for the second term, we note that if $\cF_{\theta}$ is exactly the state flow induced by $\PF$, we have
\begin{equation*} 
\begin{split}
\E_{\tau \sim \PF}\left[\sum_{t = 1}^{n_\tau} \nabla_\theta \cF_\theta(s_t) \right] & = \E_{\tau \sim \PF}\left[\sum_{s \in \cS \setminus \{s_0, s_f\}} \sum_{t = 0}^{n_\tau} \mathbb{I}\{s_t = s\} \nabla_\theta \cF_\theta(s)  \right]  \\
& = \sum_{s \in \cS \setminus \{s_0, s_f\}} \nabla_\theta \cF_\theta(s) \E_{\tau \sim \PF}\left[\sum_{t = 0}^{n_\tau} \mathbb{I}\{s_t = s\} \right]  \\
& = \sum_{s \in \cS \setminus \{s_0, s_f\}} \frac{\cF_{\theta}(s)}{\cF_\theta(s_f)}\nabla_\theta \cF_\theta(s) = \frac{1}{2\cF_\theta(s_f)} \nabla_\theta \left( \sum_{s \in \cS \setminus \{s_0, s_f\}} \cF_{\theta}(s)^2 \right).
\end{split}
\end{equation*}

This implies that on-policy training tries to minimize the sum of squared state flows rather than the sum of state flows. This happens due to the fact that the trajectory distribution that is used to collect data for training (induced by $\PF$ in this case) visits certain states more often than others, thus a weight is given to the flow in each state equal to the expected number of visits. However, if $\PF(s \mid s_0)$ is fixed to be uniform over $\cS \setminus \{s_0, s_f\}$ (see Section~\ref{sec:experiments} and Appendix~\ref{app:fix_and_learn_pb}), this issue can be circumvented by applying the flow regularizer only in the first state of each sampled trajectory. Then, equal weight will be given to $\cF_\theta(s)$ in each state in the expected loss, thus we will be minimizing the sum of state flows. However, in our experiments we noticed that this does not significantly influence the results, so we leave exploring this phenomenon as a further research direction.

\subsection{Loss Scaling and Stability}\label{app:scaling_stability}

In this section, we provide a more detailed explanation of our scaling hypothesis (see Section~\ref{sec:experiments}). Let us consider a GFlowNet that learns $\cF$, $\PF$ and $\PB$. Since these quantities are predicted by a neural network, a standard way is to make it predict logits for the forward policy, logits for the backward policy, and the logarithm of the state flow. Flow functions are always positive, thus, predicting them in log scale is a natural approach~\cite{bengio2021flow, bengio2023gflownet}. Then, for any transition $s \to s'$, define two quantities:
\begin{equation}
\begin{split}
\Delta_{\log \cF}(s,s',\theta) &\triangleq \log\cF_{\theta}(s) + \log\PF(s' | s, \theta) 
 - \log \cF_{\theta}(s') - \log\PB(s | s', \theta)\eqsp, \\\
\Delta_{\cF}(s,s',\theta) &\triangleq \exp\left(\log\cF_{\theta}(s) + \log\PF(s' | s, \theta)\right) 
 - \exp\left(\log \cF_{\theta}(s') + \log\PB(s | s', \theta)\right)\eqsp.
\end{split}
\end{equation}
The first is the difference between the predicted logarithms of the flows in the forward and backward direction $\log \cF_F - \log \cF_B$, while the second is the difference between predicted flows in the forward and backward direction $ \cF_F - \cF_B$.
Then, the standard $\DB$ loss~\eqref{eq:DB_loss} is
$$
\mathcal{L}_{\DB}(s \to s') =  \Delta_{\log \cF}(s,s',\theta)^2, 
$$
and the $\SDB$ loss~\eqref{eq:DB_loss} proposed in~\cite{brunswic2024theory} is
$$
\mathcal{L}_{\SDB}(s \to s') =  \log\left(1 + \varepsilon \Delta_{\cF}(s,s', \theta)^2\right) \cdot (1 + \eta \cF_\theta(s)).
$$

However, for both losses, one can either replace $\Delta_{\log \cF}$ with $\Delta_{\cF}$ or the other way around. For visualization, let us fix the predicted log backward flow $\cF_B$ to be, e.g., $1$, and plot the losses with respect to the varying value of the predicted log forward flow $\cF_F$. The plots are presented in Figure~\ref{fig:losses}. One can note that as argument $\log \cF_F$ decreases, both losses in $\Delta_{\cF}$ scale quickly plateau, thus their derivative goes to zero. From the optimization perspective, this means that when the predicted log flow needs to be \textit{increased}, the gradient step will be very small since the derivative of the loss is almost zero. On the other hand, when the predicted log flow needs to be \textit{decreased}, the gradient step will be larger since losses have much higher derivatives in the corresponding regions. In combination with Proposition~\ref{th:total_flow}, this gives a possible explanation to the stability of $\Delta_{\cF}$ scale losses: \textit{they are biased towards underestimation of the flows and, as a result, biased towards solutions with smaller expected trajectory length.} We note that the same reasoning can be applied to the stable flow matching loss proposed in~\cite{brunswic2024theory} since it also operates with differences between flows in $\Delta_{\cF}$ scale.

However, as we show in our experimental evaluation (Section~\ref{sec:experiments}), \textit{this comes at the cost of learning GFlowNets that match the reward distribution less accurately}.

\begin{figure}[t!]
    \centering
    \includegraphics[width=0.47\linewidth]{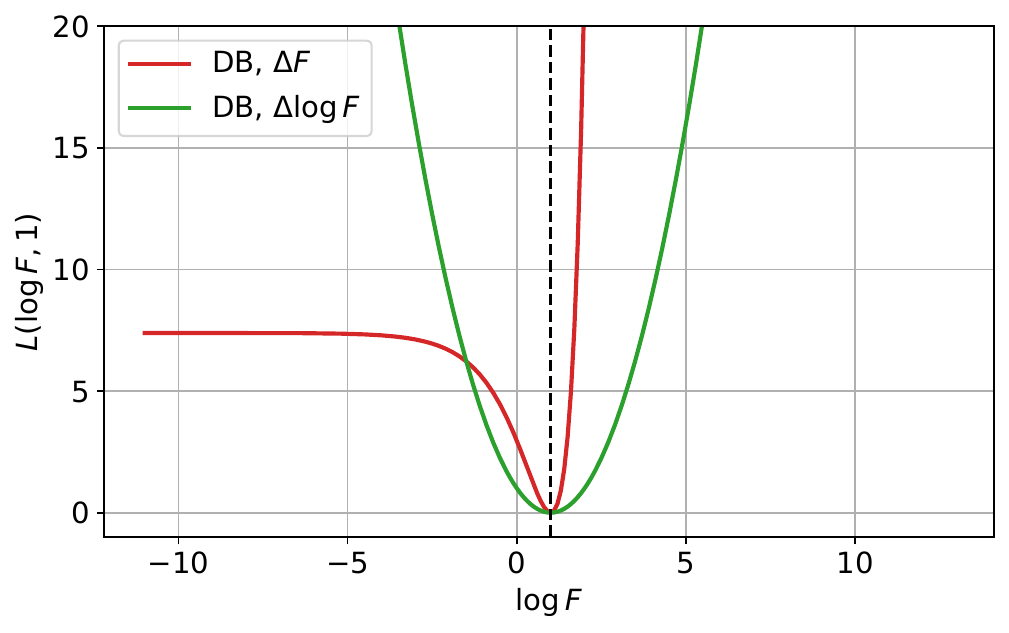}
    \includegraphics[width=0.47\linewidth]{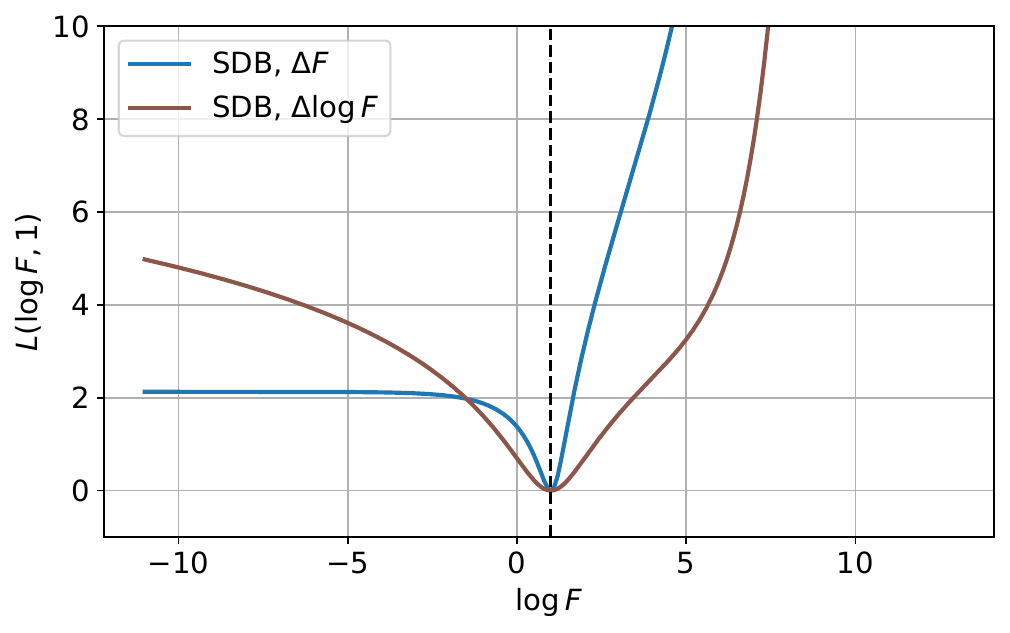}
    \caption{Plots for $\DB$ and $\SDB$ losses in $\Delta \cF$ and $\Delta \log \cF$ scales with fixed predicted log backward flow $= 1$ and varying predicted log forward flow. More specifically, \textcolor{Green}{green} curve is $y = (x - 1)^2$, \textcolor{red}{red} curve is $y = \left(e^x - e^1\right)^2$, \textcolor{Brown}{brown} curve is $y = \log\left( 1+ (x - 1)^2 \right) \cdot (1 + 0.001e^x)$, \textcolor{blue}{blue} curve is $y = \log\left( 1+ \left(e^x - e^1\right)^2 \right) \cdot (1 + 0.001e^x)$}.  
    \label{fig:losses}
\end{figure}

\subsection{Fixed $\PB$ and Trainable $\PB$}\label{app:fix_and_learn_pb}

In non-acyclic environments, $s_0$ and $s_f$ generally are fictive states that do not correspond to any object. Then $\PF(s_f \mid s)$ corresponds to the probability to terminate a trajectory in state $s$, while $\PF(s \mid s_0)$ corresponds to the probability that a trajectory starts in the state $s$. Thus, the choice of $\vout(s_0)$ is crucial in the design of the environment. If this set is large, e.g., coincides with $\in \cS \setminus \{s_0, s_f\}$, one has to fix $\PF(s \mid s_0)$ to some distribution, e.g. uniform, otherwise learning becomes intractable. However, in this case $\PB(s_0 \mid s)$ has to be trainable, otherwise, it may be impossible to satisfy the detailed balance conditions for transitions $s_0 \to s$.

In our experiments, we consider two settings: training with a fixed $\PB$ and using a trainable $\PB$. 

In case of fixed $\PB$, we consider the case when $\vout(s_0) = \{ s_{\text{init}} \}$, where $s_{\text{init}}$ is some fixed state $\in \cS \setminus \{s_0, s_f\}$. Thus the first transition for all trajectories is to go from $s_0$ to $s_{\text{init}}$. Then, for any $s \in \cS \setminus \{s_0, s_f, s_{\text{init}}\}$, $\PB(\cdot \mid s)$ is uniform over the parents of $s$, while $\PB(s_0 \mid s_{\text{init}}) = 1 - \varepsilon$ for some small $\varepsilon > 0$ and $\PB(s \mid s_{\text{init}}) = \varepsilon / (\vin(s_{\text{init}}) - 1)$ for other transitions $s \to s_{\text{init}}$.

For a trainable $\PB$, we consider the case when $\vout(s_0) = \cS \setminus \{s_0, s_f\}$. Here we fix the first forward transition probability $\PF(s \mid s_0)$ to be uniform over $\cS \setminus \{s_0, s_f\}$. In this case, $\DB$ loss for the first transition takes a special form:
\begin{equation}\label{eq:first_db}
{\cL_{\DB}(s_0 \to s) \triangleq \bigg(\log \cZ_{\theta} - \log | \cS \setminus \{s_0, s_f\}| - \log \PB(s_0 \mid s, \theta) - \log \cF_{\theta}(s)  \bigg)^2, }
\end{equation}
where $\log \cZ_{\theta} - \log | \cS \setminus \{s_0, s_f\}|$ corresponds to $\log \cF_\theta(s_0) + \log \PF(s \mid s_0)$. An important note is that $\log \cF_\theta(s_0)$ for optimal solutions always coincides with $\log \cZ$; thus, it is usually harmful to apply state flow regularization~\eqref{eq:RDB_loss} to it.

\subsection{Solving Small Environments Exactly}\label{app:small_env_solution}

Suppose we have a fixed backward policy $\PB$ and a final flow $\cF(s_f)$. Then, induced flows $\cF$ and the corresponding forward policy $\PF$ can be obtained exactly for small environments. Consider the following system of linear equations with respect to $\hat{\cF}(s)$ that arises from Proposition~\ref{th:flow_eqs}:
\begin{equation}\label{eq:state_flow_system}
\left\{
\arraycolsep=1.5pt\def\arraystretch{2.2}
\begin{array}{l}
\hat{\cF}(s) = \sum_{s' \in \vout(s)} \PB(s \mid s')\hat{\cF}(s'), \; \forall s \in \cS \setminus \{s_f\}, \\
\hat{\cF}(s_f) = \cF(s_f).
\end{array}
\right.
\end{equation}
The system has $|\cS|$ variables and $|\cS|$ equations. $\hat{\cF}(s) = \cF(s)$ is a solution, where $\cF(s)$ are state flows induced by $\PB$ and $\cF(s_f)$, and the uniqueness of the solution follows from Proposition~\ref{th:pb_from_flow}. Thus, by solving the system, one can exactly find induced state flows. Then, by Proposition~\ref{th:flow_eqs} and Proposition~\ref{th:pf_db}, edge flows and $\PF$ can also be exactly expressed as
$$
\cF(s \to s') = \PB(s \mid s')\cF(s'), \;\; \PF(s' \mid s) = \PB(s \mid s')\cF(s') / \cF(s).
$$
Finally, by Corollary~\ref{th:total_flow}, one can find the expected trajectory length of the induced trajectory distribution $\cP$ as:
$$
    \E_{\tau \sim \cP}[n_\tau] = \frac{1}{\cF(s_f)}\sum\limits_{s \in \cS \setminus \{s_0, s_f\}} \cF(s).
$$

Interestingly, the system~\eqref{eq:state_flow_system} can also be explained from the Markov Chain perspective. Let us take the graph $\cG$ with reversed edges, add a loop from $s_0$ to itself, and use $\PB$ to define a Markov Chain with the following transition matrix: $P(s_0 \mid s_0) = 1$, $P(s \mid s') = \PB(s \mid s')$ if there is an edge $s \to s'$, and $P(s \mid s') = 0$ otherwise. It will be an absorbing Markov Chain, with an only absorbing state $s_0$ since it is reachable from any other state by Assumption~\ref{assumption}. The transition matrix can be written in the following way:
\[P = \left[\begin{array}{ c | c }
    Q & R \\
    \hline
    \mathbf{0} & 1
  \end{array},\right]\]
where $Q$ is a $|\cS| - 1$ by $|\cS| - 1$ matrix and $R$ is a $|\cS| - 1$ by $1$ matrix. Its fundamental matrix $N$, i.e., such a matrix that $N_{s, s'}$ is equal to the expected number of visits to a non-absorbing state $s'$ before being absorbed when starting from a non-absorbing state $s$, can be obtained as:
$$
N = \sum_{k=0}^{+\infty}Q^k = (I - Q)^{-1},
$$
where $I - Q$ is always invertible (\citealp{kemeny1969finite}, Theorem 3.2.1). One can note that normalized flows $\cF(s)/\cF(s_f)$ coincide with the expected number of visits to $s$ when starting from $s_f$, thus coincide with the row of matrix $N$ corresponding to $s_f$. Finally, notice that $(I - Q)$ coincides with the transposed matrix of the truncated system~\eqref{eq:state_flow_system} (with the exception of the variable and the equation corresponding to $s_0$), thus, such system has a unique solution $\cF(s_f)(I - Q)^{-T} e_{s_f} = \cF(s_f)N^{T} e_{s_f}$, where $e_{s_f}$ is a vector of size $|\cS| - 1$ that has $1$ on the position corresponding to $s_f$ and $0$ on all others. The variable corresponding to $s_0$ should be handled separately, but it is easy to see $\hat{\cF}(s_0) = \sum_{s' \in \vout(s)} \PB(s \mid s'){\cF}(s') = \cF(s_0)$.

\section{Experimental Details}\label{app:exp_details}

\subsection{Loss Choice}\label{app:loss_choice}

While~\cite{brunswic2024theory} used the original flow matching loss~\cite{bengio2021flow} for experimental evaluation, it was previously shown to be less computationally efficient and provide slower convergence than other GFlowNet losses~\cite{malkin2022gflownets, madan2023learning} in the acyclic case, so we carry out experimental evaluation with the more broadly employed detailed balance loss~\cite{bengio2023gflownet}. Moreover, flow matching loss does not admit explicit parameterization of a backward policy, as well as training with fixed backward policies, thus not allowing us to study some of the phenomena we explore in the experiments. 

In addition, we note that the proposed state flow regularization~\eqref{eq:RDB_loss} can be potentially applied with other GFlowNet losses that learn flows, e.g. $\SubTB$~\cite{madan2023learning}, or with the modification of $\DB$ proposed in~\cite{deleu2022bayesian} that implicitly parametrizes flows as $\cF(s) = \cR(s) / \PF(s_f \mid s)$.

\subsection{Hypergrids}\label{app:exp_grids}

Formally, $\cS \setminus \{s_0, s_f\}$ is a set of points with integer coordinates inside a $D$-dimensional hypercube with side length $H$: $\left\{\left(s^1, \ldots, s^D\right) \mid s^i \in\{0,1, \ldots, H-1\}\right\}$. $s_0$ and $s_f$ are auxiliary states that do not correspond to any point inside the grid. Possible transitions correspond to increasing or decreasing any coordinate by $1$ without exiting the grid. In addition, for each state $s \in \cS \setminus \{s_0, s_f\}$ there is a terminating transition $s \to s_f$. GFlowNet reward at $s = (s^1, \ldots, s^D)$ is defined as
\begin{align*}
\cR(s) \triangleq R_0 &+ R_1 \prod_{i = 1}^D \mathbb{I}\left[0.25 < \left|\frac{s^i}{H-1}-0.5\right|\right] + R_2 \prod_{i = 1}^D \mathbb{I}\left[0.3 < \left|\frac{s^i}{H-1}-0.5\right| < 0.4\right]\eqsp,
\end{align*}
where $0<R_0 \ll R_1<R_2$. \cite{brunswic2024theory} do not specify reward parameters used in their experiments, so we use the parameters from the acyclic version of the environment studied in~\cite{malkin2022trajectory}, i.e. $(R_0 = 10^{-3}, R_1 = 0.5, R_2 = 2.0)$. 

The utilized metric is:
$$\frac{1}{2}\sum_{x \in \cX} |\cR(x) / \cZ - \pi(x)|,$$
where $\pi(x)$ is the empirical distribution of last $2 \cdot 10^5$ samples seen in training (endpoints of trajectories sampled from $\PF$).

All models are parameterized by MLP with 2 hidden layers and 256 hidden size, which accept a one-hot encoding of $s$ as input. $\cF_{\theta}(s), \PF(s' \mid s, \theta), \PB(s \mid s', \theta)$ share the same backbone, with different linear heads predicting the logarithm of the state flow, the logits of the forward policy and the logits of the backward policy. In the case of the fixed $\PB$, $s_{\text{init}}$ corresponds to the center of the grid, and we take $\varepsilon = 10^{-8}$ (see Appendix~\ref{app:fix_and_learn_pb}).

We train all models on-policy. We use Adam optimizer with a learning rate of $10^{-3}$ and a batch size of 16 (number of trajectories sampled at each training step). For $\log \cZ_{\theta}$ we use a larger learning rate of $10^{-2}$ (see  \cite{malkin2022trajectory}). All models are trained until $2 \cdot 10^6$ trajectories are sampled, and the empirical sample distribution $\pi(x)$ is computed over the last $2 \cdot 10^{5}$ samples seen in training. For $\SDB$ we set $\varepsilon=1.0$ and $\eta = 10^{-3}$. We found that using larger values of $\eta$ can lead to smaller expected trajectory length, but also significantly interfere with the sampling fidelity of the learned GFlowNet, thus we opt for these values in our experiments.

\subsection{Permutations}\label{app:exp_perms}

All models are parameterized by MLP with 2 hidden layers and 128 hidden size, which accept a one-hot encoding of $s$ as input. $\cF_{\theta}(s), \PF(s' \mid s, \theta), \PB(s \mid s', \theta)$ share the same backbone, with different linear heads predicting the logarithm of the state flow, the logits of the forward policy and the logits of the backward policy. In the case of the fixed $\PB$, $s_{\text{init}}$ corresponds to the permutation $(n, n - 1, \dots, 2, 1)$, and we take $\varepsilon = 10^{-8}$ (see Appendix~\ref{app:fix_and_learn_pb}). 

We train all models on-policy. We use Adam optimizer with a learning rate of $10^{-3}$ and a batch size of 512 (number of trajectories sampled at each training step). We found that using small batch sizes can significantly hinder training stability for this environment; thus, we opt for a larger value. All models are trained for $10^5$ iterations. For $\log \cZ_{\theta}$ we use a larger learning rate of $10^{-2}$ (see \cite{malkin2022trajectory}). To compute $\Delta \log \mathcal{Z}$ we take the average value of $\log \cZ_{\theta}$ over the last 10 training checkpoints. Empirical distribution $\hat{C}(k)$ is computed over the last $10^{5}$ samples seen in training. For $\SDB$ we set $\varepsilon=1.0$ and $\eta = 10^{-3}$. We found that using larger values of $\eta$ can lead to smaller expected trajectory length, but also significantly interfere with the sampling fidelity of the learned GFlowNet, thus we opt for these values in our experiments.

Suppose that $x_1, \dots, x_m$ is a set of GFlowNet samples (terminal states of trajectories sampled from $\PF$). Then, the empirical $L_1$ error of fixed point probabilities is defined as:
$$
\sum_{k=0}^N \left| C(k) - \frac{1}{m}\sum_{i=1}^m \mathbb{I}\{x_i(k)=k\}\right|,
$$
and the relative error of mean reward is defined as 
$$
\left|\frac{\E[\cR(x)] - \frac{1}{m}\sum_{i=1}^m \cR(x_i)}{\E[\cR(x)]}\right|,
$$
where $\E[\cR(x)] = \sum_{x \in \cX} \cR(x) \frac{\cR(x)}{\cZ}$. We compute mean reward over $10^4$ samples.

\subsubsection{Reward Distribution Properties}\label{app:exp_perms_vals}

We define the GFlowNet reward as $\cR(s) = \exp(\frac{1}{2}\sum_{k=1}^{n} \mathbb{I}\{s(k) = k\})$. We are interested in the true values of three quantities: \vspace{-0.1cm}
\begin{enumerate}
    \item normalizing constant $\cZ = \sum_{x \in \cX} \cR(x)$,
    \item true expected reward $\E[\cR(x)] = \sum_{x \in \cX} \cR(x) \frac{\cR(x)}{\cZ}$,
    \item fixed point probabilities $C(k) = \mathbb{P}\left(\left(\sum_{i=1}^{n} \mathbb{I}\{x(i) = i\}\right) = k\right)$ with respect to the reward distribution.
\end{enumerate} \vspace{-0.1cm}

While computing sums over all permutations is intractable for $n$ above some threshold, below, we show that for this particular reward, analytical expressions for these quantities can be derived. 

First, we will derive the formula for the total number of permutations of length $n$ with exactly $k$ fixed points, which we will denote as $D(k, n)$. In combinatorics, such permutations are known as partial derangements, and the quantity is known as rencontres numbers \citep[p.180]{comtet1974advanced}. Note that \vspace{-0.1cm}
$$D(k, n) = \binom{n}{k}D(0, n-k),$$
since choosing a permutation with $k$ fixed points coincides with choosing $k$ positions for fixed points, and permuting the remaining elements such that there are no fixed points among them. So let us start with the derivation of $D(0, n)$. Denote $S_i$ to be the set of permutations on $n$ elements that has a fixed point on position $i$. Then, by the inclusion-exclusion principle, we have 
\begin{equation*}
\begin{split}
    \left|S_1 \cup \cdots \cup S_n\right| & =\sum_i\left|S_i\right|-\sum_{i<j}\left|S_i \cap S_j\right|+\sum_{i<j<k}\left|S_i \cap S_j \cap S_k\right|+\cdots+(-1)^{n+1}\left|S_1 \cap \cdots \cap S_n\right| \\
    & =\binom{n}{1}(n-1)!-\binom{n}{2}(n-2)!+\binom{n}{3}(n-3)!-\cdots+(-1)^{n+1}\binom{n}{n} 0! \\
    & =\sum_{i=1}^n(-1)^{i+1}\binom{n}{i}(n-i)! = n!\sum_{i=1}^n \frac{(-1)^{i+1}}{i!}.\\
\end{split}
\end{equation*}
Then 
$$D(0, n) = n! - \left|S_1 \cup \cdots \cup S_n\right| = n! - n!\sum_{i=1}^n \frac{(-1)^{i+1}}{i!} = n!\sum_{i=0}^n \frac{(-1)^{i}}{i!}.$$
Thus, we have 
$$D(k, n) = \binom{n}{k}D(0, n - k) = \frac{n!}{k!(n-k)!}(n-k)!\sum_{i=0}^{n-k} \frac{(-1)^{i}}{i!} = n!\sum_{i=0}^{n-k} \frac{(-1)^{i}}{i!k!}.$$

Finally, all of the quantities we are interested in are easily expressed through $D(k, n)$: \vspace{-0.15cm}

\begin{enumerate}
    \item $\cZ = \sum_{k = 0}^n D(k, n)\exp(k/2)$.
    \item $\E[\cR(x)] = \sum_{k = 0}^n D(k, n)\exp(k/2)\frac{\exp(k/2)}{\cZ}$.
    \item $C(k) = D(k, n)\frac{\exp(k/2)}{\cZ}$.
\end{enumerate} \vspace{-0.15cm}

For reference, the formula yields values of $\log \cZ \approx$ $3.8262$, $11.2533$, $42.9843$ for $n=4$, $n=8$, and $n=20$ respectively.

\newpage

\section{Additional Plots}\label{app:add_plots}

\begin{figure}[h!]
    \centering
    \includegraphics[width=0.75\linewidth]{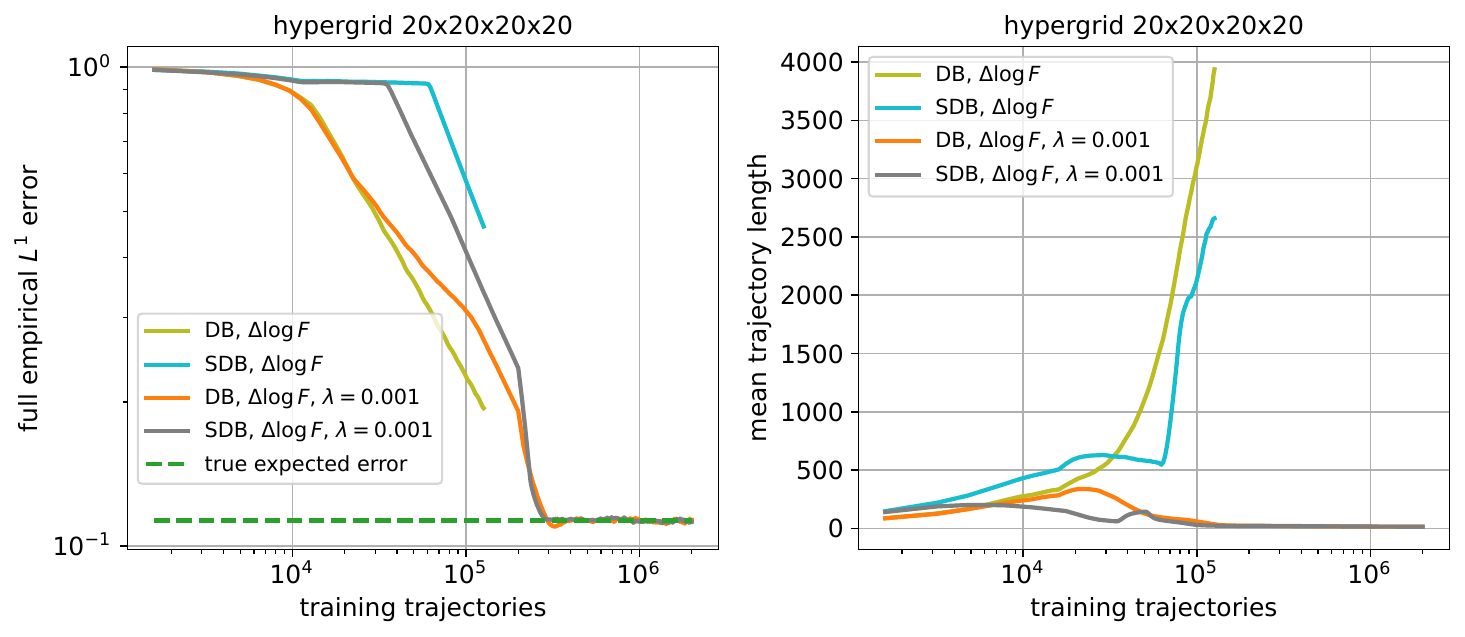}
    \vspace{-0.15cm}
    \caption{\textit{Left:} evolution of $L^1$ distance between empirical distribution of samples and target distribution. \textit{Right:} evolution of mean length of sampled trajectories. Here we note that when $\Delta \log \cF$ scale losses are employed without state flow regularization, mean trajectory length tends to infinity. Plots are not full since training is done on-policy. Thus, the time needed for full training also grows according to the length of trajectories.} 
\label{fig:big_grid_app}
\end{figure}

\begin{figure}[h!]
    \centering
    \includegraphics[width=0.75\linewidth]{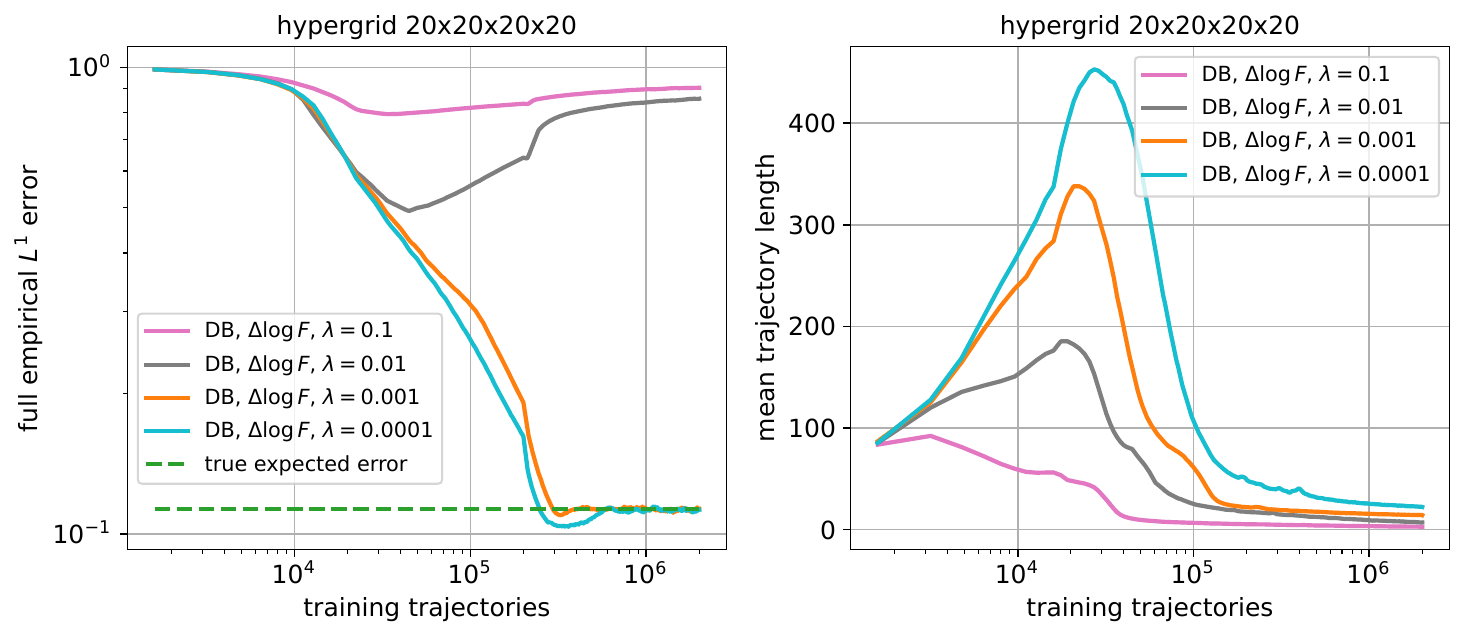}
    \vspace{-0.15cm}
    \caption{\textit{Left:} evolution of $L^1$ distance between empirical distribution of samples and target distribution. \textit{Right:} evolution of mean length of sampled trajectories. Here, we see the effects of state flow regularization of different strength $\lambda$. Larger values of $\lambda$ lead to smaller mean trajectory length, however, if $\lambda$ is too large, the obtained forward policy will be significantly biased.} 
\label{fig:big_grid_reg_app}
\end{figure}

\begin{figure}[h!]
    \raggedleft
    \includegraphics[width=0.95\linewidth]{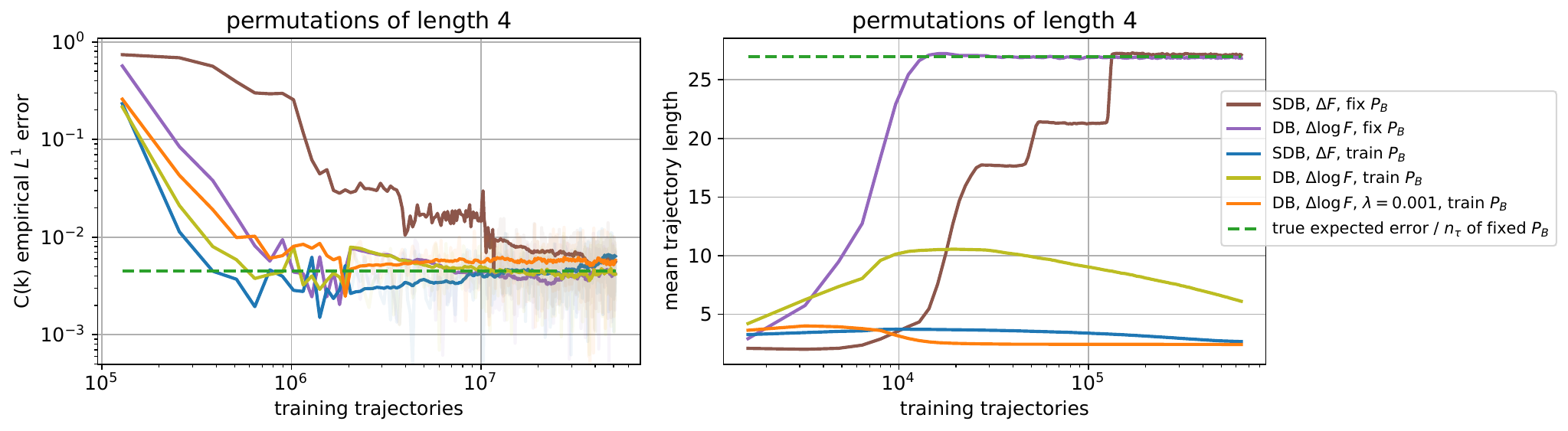}
    \vspace{-0.15cm}
    \caption{Comparison of non-acyclic GFlowNet training losses on a small permutation environment. \textit{Left:} evolution of $L_1$ distance between true and empirical distribution of fixed point probabilities $C(k)$. \textit{Right:} evolution of mean length of sampled trajectories. The results are similar to the same experiment on hypergrids (Figure~\ref{fig:small_grid}), with the only difference that here $\SDB$ loss in $\Delta \cF$ scale here has fast convergence with a trainable backward policy.} 
\label{fig:small_perms_app}
\end{figure}


\end{document}